
\documentclass{article}

\usepackage{microtype}
\usepackage{graphicx}
\usepackage{booktabs} 

\usepackage{hyperref}



\usepackage[accepted]{icml2025}

\usepackage{amsmath}
\usepackage{amssymb}
\usepackage{mathtools}
\usepackage{amsthm}

\usepackage[capitalize,noabbrev]{cleveref}

\theoremstyle{plain}
\newtheorem{theorem}{Theorem}[section]

\theoremstyle{definition}
\newtheorem{definition}[theorem]{Definition}
\newtheorem{assumption}[theorem]{Assumption}
\theoremstyle{remark}

\usepackage[textsize=tiny]{todonotes}

\usepackage{thmtools, thm-restate} 
\usepackage{multirow} 
\usepackage{bm}
\usepackage{subcaption}
\captionsetup[subfigure]{justification=centering}
\usepackage{wrapfig}

\def\eqref#1{Eq.~(\ref{#1})}

\newenvironment{sproof}{%
  \proof}{\endproof}

\newcommand{\E}{\mathbb{E}}    

\DeclareMathOperator*{\argmax}{arg\,max}

\newcommand*\diff{\mathop{}\!\mathrm{d}}
\newcommand{\bs}{\boldsymbol}


\definecolor{highlight}{rgb}{0,0,0}

\icmltitlerunning{Local Manifold Approximation and Projection for Manifold-Aware Diffusion Planning}

\begin{document}

\twocolumn[
\icmltitle{Local Manifold Approximation and Projection \\for Manifold-Aware Diffusion Planning}




\begin{icmlauthorlist}
\icmlauthor{Kyowoon Lee}{kaist}
\icmlauthor{Jaesik Choi}{kaist,ineeji}
\end{icmlauthorlist}

\icmlaffiliation{kaist}{Korea Advanced Institute of Science
and Technology (KAIST)}
\icmlaffiliation{ineeji}{INEEJI}

\icmlcorrespondingauthor{Jaesik Choi}{jaesik.choi@kaist.ac.kr}

\icmlkeywords{Machine Learning, ICML}

\vskip 0.3in
]



\printAffiliationsAndNotice{}  

\begin{abstract}
Recent advances in diffusion-based generative modeling have demonstrated significant promise in tackling long-horizon, sparse-reward tasks by leveraging offline datasets. While these approaches have achieved promising results, their reliability remains inconsistent due to the inherent stochastic risk of producing infeasible trajectories, limiting their applicability in safety-critical applications. We identify that the primary cause of these failures is inaccurate guidance during the sampling procedure, and demonstrate the existence of manifold deviation by deriving a lower bound on the guidance gap. To address this challenge, we propose \emph{Local Manifold Approximation and Projection} (LoMAP), a \emph{training-free} method that projects the guided sample onto a low-rank subspace approximated from offline datasets, preventing infeasible trajectory generation. We validate our approach on standard offline reinforcement learning benchmarks that involve challenging long-horizon planning. Furthermore, we show that, as a standalone module, LoMAP can be incorporated into the hierarchical diffusion planner, providing further performance enhancements.
\end{abstract}

\section{Introduction}

Planning over long horizons is crucial for autonomous systems operating in real-world settings, where rewards are sparse and actions often entail delayed consequences \cite{lee2023adaptive}. When environment dynamics are fully known, methods such as Model Predictive Control~\cite{tassa2012synthesis} and Monte Carlo Tree Search~\cite{silver2016mastering, silver2017mastering, lee2018deep} have achieved remarkable success. In most practical applications, however, the dynamics are not readily available and must be learned from data. Model-based reinforcement learning (MBRL)~\cite{sutton2018reinforcement} addresses this by coupling learned dynamics models with planners, offering increased data-efficiency and the flexibility to adapt across tasks. While MBRL offers a promising framework, it is vulnerable to adversarial plan exploitation when the learned model is imperfect~\cite{talvitie2014model, asadi2018lipschitz, luo2018algorithmic, janner2019trust, voelcker2022value}.

Recent progress in diffusion models provides an appealing alternative for long-horizon planning. Originally introduced as powerful generative models that iteratively reverse a multistep noising process~\cite{ho2020denoising, song2020score}, diffusion models have demonstrated state-of-the-art sample quality across various domains~\cite{nichol2021glide, luo2021diffusion, li2022diffusion}. Building upon these successes, several works~\cite{janner2022planning, ajay2023is, liang2023adaptdiffuser} have leveraged diffusion to model entire trajectories in sequential decision-making tasks. By avoiding step-by-step autoregressive prediction, these approaches reduce error accumulation and naturally capture long-range dependencies. Moreover, by leveraging guided sampling \cite{dhariwal2021diffusion}, diffusion planners can sample trajectories biased toward high-return behaviors, achieving notable performance on standard offline reinforcement learning (RL) benchmarks~\cite{fu2020d4rl}.

\let\thefootnote\relax\footnotetext{
\hspace{-\footnotesep-\footnotesep}
Codes are available at \href{https://github.com/leekwoon/lomap}{\texttt{github.com/leekwoon/lomap}}.
}

Despite these advantages, diffusion planners struggle to guarantee reliable and feasible plans due to their inherent \emph{stochasticity}. Unlike deterministic models that produce consistent outputs for a given input, diffusion models generate probabilistic trajectories. While this enables diverse sampling, it introduces the risk of generating physically implausible trajectories, which are referred to as \emph{artifacts} in vision domains~\cite{bau2018gan,shen2020interpreting}. Furthermore, the reward-guided sampling procedure used by unconditional diffusion planners, which jointly defines the sampling distribution and the energy function, can be hard to estimate accurately~\cite{lu2023contrastive}. This inaccuracy may cause intermediate trajectory samples far away from the underlying data manifold during denoising, ultimately producing infeasible or low-quality trajectories and precluding planners from being useful in safetycritical applications. To address these limitations, recent works~\cite{lee2023refining, feng2024resisting} introduced trajectory-refinement strategies to enhance sample quality, though the challenge of ensuring fully reliable plans remains an open problem.

In this paper, we demonstrate that manifold deviation occurs during diffusion sampling by establishing a lower bound on the guidance estimation error. To address this challenge, we introduce \textbf{Lo}cal \textbf{M}anifold \textbf{A}pproximation and \textbf{P}rojection (\textbf{LoMAP}), a \emph{training-free} method that projects guided samples back onto a low-rank subspace approximated from offline datasets. LoMAP operates entirely at test time, requiring no additional training. At each reverse-diffusion step, it retrieves a few offline trajectories closest to the \emph{denoised} version of the current sample, forward-diffuses these neighbors, and applies principal component analysis (PCA) to span a local low-dimensional subspace. The current sample is then \emph{projected} onto this subspace, thereby significantly reducing off-manifold deviations and improving the likelihood of generating valid behaviors. Because LoMAP only adds a simple projection step after each reward-guided update, it is easily integrated into existing diffusion planners and effectively prevents manifold deviation.

Our main contributions are as follows: 
\textbf{(1)} We illustrate the manifold deviation issue in diffusion planners by deriving a theoretical lower bound on the guidance estimation error. 
\textbf{(2)} We propose \emph{Local Manifold Approximation and Projection} (LoMAP), a training-free, plug-and-play module for diffusion planners that mitigates manifold deviation through local low-rank projections. 
\textbf{(3)} We demonstrate the effectiveness of LoMAP on standard offline RL benchmarks, particularly in challenging AntMaze task.

\section{Background}

\begin{figure*}[t]
    \centering
    \includegraphics[width=0.99\linewidth]{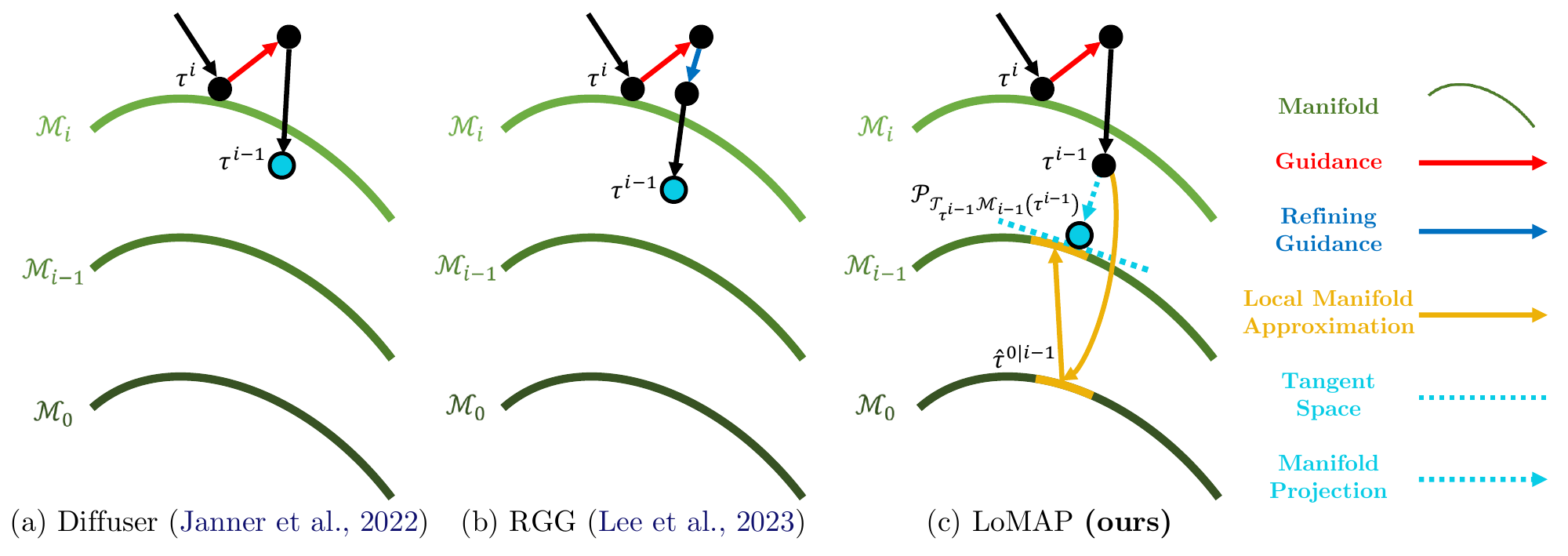}
    \caption{A schematic overview of our approach, contrasted with Diffuser~\cite{janner2022planning} and RGG~\cite{lee2023refining}.
    As described in \Cref{sec:limitations_inexact_guidance}, \emph{inexact} guidance arises in high-dimensional settings, causing deviations from the data manifold. RGG addresses this by refining samples via an OOD detection metric but relies on finely tuned guidance steps.
    In contrast, our LoMAP framework projects guided samples back onto a local low-rank subspace (\cref{sec:lomap}), ensuring that sampling remains closer to the data manifold at each diffusion step.}
    \label{fig:overview}
    \vspace{-0.3cm}
\end{figure*}

\subsection{Problem Setting}

We consider a Markov decision process (MDP) described by the tuple 
\(\langle \mathcal{S}, \mathcal{A}, P, r, \gamma \rangle\). Here, \(\mathcal{S}\) is the state space, \(\mathcal{A}\) is the action space, 
\(P: \mathcal{S} \times \mathcal{A} \times \mathcal{S} \rightarrow [0, +\infty)\) is the transition model, \(r: \mathcal{S} \times \mathcal{A} \rightarrow \mathbb{R}\) is a reward function, and \(\gamma \in [0, 1]\) is the discount factor. Given a planning horizon \(T\), the objective of trajectory optimization is to find the action sequence $\boldsymbol{a}_{0:T}^{*}$ that maximizes the expected return:
\[
    \boldsymbol{a}_{0:T}^{*} = \argmax_{\boldsymbol{a}_{0:T}}\mathcal{J}(\boldsymbol{\tau})=\argmax_{\boldsymbol{a}_{0:T}}\sum_{t=0}^{T}\gamma^tr(\boldsymbol{s}_t,\boldsymbol{a}_t),
\]
where \(\boldsymbol{\tau} = (\boldsymbol{s}_0, \boldsymbol{a}_0, \boldsymbol{s}_1, \boldsymbol{a}_1, \ldots, \boldsymbol{s}_T, \boldsymbol{a}_T)\) is a trajectory, and \(\mathcal{J}(\boldsymbol{\tau})\) represents the expected return of the trajectory.

\subsection{Planning with Diffusion Models}

Diffusion planners~\cite{janner2022planning} utilize diffusion probabilistic models~\cite{sohl2015deep,ho2020denoising} to model a trajectory distribution as a Markov chain with Gaussian transitions:
\begin{align}
\label{eq:denoising process}
p_\theta(\boldsymbol{\tau}^0)=\int p(\boldsymbol{\tau}^M)\prod_{i=1}^{M} p_\theta(\boldsymbol{\tau}^{i-1}|\boldsymbol{\tau}^i)\diff\boldsymbol{\tau}^{1:M}
\end{align}
where $p(\boldsymbol{\tau}^M)$ is a standard Gaussian prior, $\boldsymbol{\tau}^0$ is a noise-free trajectory, and $p_{\theta}(\boldsymbol{\tau}^{i-1}|\boldsymbol{\tau}^i)$ is a denoising process which is a learnable Gaussian transition: 
\begin{align}
\label{eq:reverse_process}
p_{\theta}(\boldsymbol{\tau}^{i-1}|\boldsymbol{\tau}^i)=\mathcal{N}(\boldsymbol{\tau}^{i-1}|\boldsymbol{\mu}_{\theta}(\boldsymbol{\tau}^i), \boldsymbol{\Sigma}^i). 
\end{align}
This reverse process inverts a forward process that incrementally corrupts the data with Gaussian noise according to a variance schedule $\{{\beta_i}\}_{i=1}^M$: 
\begin{align}
\label{eq:forward_process}
q(\boldsymbol{\tau}^{i}|\boldsymbol{\tau}^{i-1}) \coloneqq {\cal N} ( \boldsymbol{\tau}^{i} ; \sqrt{1 - \beta_i} \boldsymbol{\tau}^{i-1}, \beta_i {\bs I}).
\end{align}
One useful property is the ability to directly sample ${\bs \tau}^i$ from ${\bs \tau}^0$ at any diffusion timestep $i$:
\begin{align}
\label{eq:ddpm_forward}
q({\bs \tau}^i \mid{\bs \tau}^0) = {\cal N} ({\bs \tau}^i; \sqrt{\alpha_i} {\bs \tau}^0, (1- \alpha_i) {\bs I}),
\end{align}
where $\alpha_i \coloneqq \prod_{s=1}^i (1 - \beta_s)$. The variance schedule is designed to respect $\alpha_M \approx 0$ so that ${\bs \tau}^M$ becomes close to ${\cal N}(\bs{0}, \bs{I})$.

During the training procedure, rather than directly parameterizing $\boldsymbol{\mu}_{\theta}$, Diffuser trains noise-predictor model $\boldsymbol{\epsilon}_\theta$ to predict the noise $\bs{\epsilon}$ that was added to corrupt ${\bs \tau}^0$ into ${\bs \tau}^i$ \cite{ho2020denoising}:
\begin{align}
\label{eq:loss}
\mathcal{L}(\theta):=\mathbb{E}_{i,\boldsymbol{\epsilon},\boldsymbol{\tau}^0}[\|\boldsymbol{\epsilon} - \boldsymbol{\epsilon}_\theta(\boldsymbol{\tau}^i)\|^2],
\end{align}
where $\boldsymbol{\tau}^i=\sqrt{\alpha_i} {\bs \tau}^0+\sqrt{1-\alpha_i}\boldsymbol{\epsilon}$ and $\boldsymbol{\epsilon} \sim \mathcal{N}(\mathbf{0},\mathbf{I})$.

\paragraph{Trajectory optimization as guided sampling.}

To generate trajectories that have high returns, the energy-guided sampling can be considered:
\begin{equation}
    \label{Eq:e_guided_sampling}
    \tilde{p}_\theta({\bs \tau}^0) \propto p_\theta({\bs \tau}^0) 
    \exp{(\mathcal{J}({\bs \tau}^0)}),
\end{equation}
Therefore, Diffuser trains a separate regression network $\mathcal{J}_\phi$ that predicts the return $\mathcal{J}(\boldsymbol{\tau}^0)$ based on a noisily perturbed version $\boldsymbol{\tau}^i$. This is accomplished through the mean-square-error (MSE) objective:
\begin{equation}
\label{Eq:MSE_qt_objective}
    \min_{\phi} \mathbb{E}_{i,\boldsymbol{\epsilon},\boldsymbol{\tau}^0} \left[
    \|\mathcal{J}_\phi({\bs \tau}^i) - \mathcal{J}({\bs \tau}^0)\|_2^2
\right].
\end{equation}
During the sampling stage, a classifier guidance \cite{dhariwal2021diffusion} is employed, incorporating gradients of $\mathcal{J}_\phi$ into the reverse diffusion process in \eqref{eq:reverse_process}. Specifically, the mean is updated as follows:
\begin{equation}
\label{eq:guided_sampling}
\tilde{p}_{\theta}(\boldsymbol{\tau}^{i-1}|\boldsymbol{\tau}^i)=\mathcal{N}(\boldsymbol{\tau}^{i-1}|\boldsymbol{\mu}_{\theta}(\boldsymbol{\tau}^i)  + \omega \boldsymbol{\Sigma}^i g, \boldsymbol{\Sigma}^i) ,
\end{equation}
where $g=\nabla_{{\bs \tau}}\mathcal{J}_\phi({\bs \tau})|_{{\bs \tau} = \boldsymbol{\mu}_{\theta}(\boldsymbol{\tau}^i)}$ and $\omega$ is the guidance scale that controls the strength of the guidance.

\subsection{Tweedie's Formula for Denoising}

When samples are perturbed by Gaussian noise $\tilde{\boldsymbol{\tau}} \sim \mathcal{N}(\boldsymbol{\tau},\;\sigma^2 \mathbf{I})$, Tweedie’s formula~\cite{robbins1992empirical} provides a Bayes-optimal denoised estimate for observations:
\begin{equation}
    \label{eq:tweedie_gaussian}
    \mathbb{E}[\boldsymbol{\tau}|\tilde{\boldsymbol{\tau}}]
    =\tilde{\boldsymbol{\tau}}
    +\sigma^2\,\nabla_{\tilde{\boldsymbol{\tau}}}\log p(\tilde{\boldsymbol{\tau}}).
\end{equation}
where $p(\tilde{\bs \tau}) \coloneqq \int  p( \tilde{\bs \tau} | {\bs \tau} ) p( {\bs \tau} )\text{d}{\bs \tau}$. If we consider a discrete-time diffusion model with perturbation in \eqref{eq:ddpm_forward}, we can get the posterior mean by rewriting Tweedie's fomula \cite{chung2022diffusion, chung2022improving}:
\begin{align}
\begin{split}
\label{eq:tweedie_diffusion}
        \mathbb{E}[\boldsymbol{\tau}^0 | \boldsymbol{\tau}^i]
        & = \frac{1}{\sqrt{\alpha_{i}}}
\bigl(\boldsymbol{\tau}^i
+\,(1-\alpha_{i})\,\nabla_{\!\boldsymbol{\tau}^i}\log p(\boldsymbol{\tau}^i)\bigr) \\
        & \approx \frac{1}{\sqrt{\alpha_{i}}}
\bigl(\boldsymbol{\tau}^i
-\,\sqrt{1-\alpha_{i}}\,\boldsymbol{\epsilon}_\theta(\boldsymbol{\tau}^i)\bigr), 
\end{split}
\end{align}
where the scaled score function is estimated by $\boldsymbol{\epsilon}_\theta$: $\nabla_{\!\boldsymbol{\tau}^i}\log p(\boldsymbol{\tau}^i)\approx -\boldsymbol{\epsilon}_\theta(\boldsymbol{\tau}^i)/\sqrt{1-\alpha_{i}}$.

\subsection{Low-dimensional Manifold Assumption}

High-dimensional trajectory often exhibits intrinsic low-dimensional structure. We formalize this through the following assumption:

\begin{assumption}
(Low-dimensional Manifold Assumption). The set of clean data $\mathcal{M}_0$ lies on a $k$-dimensional subspace $\mathbb{R}^k$ with $k \ll d$. 
\end{assumption}

Under this assumption, recent study \cite{chung2022improving} has shown that the set of noisy data $\boldsymbol{\tau}^i$ is inherently concentrated around a ($d-k$) dimensional manifold $\mathcal{M}_i$.

\section{Manifold-Aware Diffusion Planning}

\def\NoNumber#1{\STATE \textcolor{gray}{#1}}

{\centering
\begin{figure*}[!t]
\begin{minipage}[t]{0.49\textwidth}
  \begin{algorithm}[H]
    \caption{\textbf{Manifold-Aware Diffusion Planning}}
    \label{alg:lomap_planning}
    \begin{algorithmic}[1]
    \STATE \textbf{Require} Diffuser $\mu_\theta$, guide $\mathcal{J}_\phi$, covariances $\Sigma^i$, scale $\omega$, offline dataset $\{\boldsymbol{\tau}^0_n\}_{n=1}^N$, number of neighbors $k$
    \WHILE{not done}
        \STATE Observe state $\boldsymbol{s}$; initialize plan $\boldsymbol{\tau}^N \sim \mathcal{N}(\boldsymbol{0}, \boldsymbol{I})$
        \FOR{$i = N, \ldots, 1$}
            \NoNumber{\small{\color{gray}\texttt{// parameters of reverse transition}}}
            \STATE $\boldsymbol{\mu} \gets \mu_\theta(\boldsymbol{\tau}^i)$
            \NoNumber{\small{\color{gray}\texttt{// guide using gradients of return}}}
            \STATE $\boldsymbol{\tau}^{i-1} \sim \mathcal{N}\bigl(\boldsymbol{\mu} + \omega\,\Sigma^i \,\nabla_{\boldsymbol{\mu}} \mathcal{J}_\phi^\text{MSE}(\boldsymbol{\mu}), \,\Sigma^i\bigr)$
            \NoNumber{\small{\color{gray}\texttt{// manifold projection (LoMAP)}}}
            \STATE $\boldsymbol{\tau}^{i-1} \gets \mathrm{LoMAP}\bigl(\boldsymbol{\tau}^{i-1};\,\{\boldsymbol{\tau}^0_n\},\,\,k\bigr)$
            \NoNumber{\small{\color{gray}\texttt{// constrain first state of plan}}}
            \STATE $\boldsymbol{\tau}^{i-1}_{\,\boldsymbol{s}_0} \gets \boldsymbol{s}$
        \ENDFOR
        \STATE Execute first action of plan $\boldsymbol{\tau}^{0}_{\,\boldsymbol{a}_0}$
    \ENDWHILE
    \end{algorithmic}
  \end{algorithm}
\end{minipage}
\hfill
\begin{minipage}[t]{0.49\textwidth}
\begin{algorithm}[H]
    \caption{\textbf{LoMAP}$(\boldsymbol{\tau}^{i-1}, \{\boldsymbol{\tau}^0_n\}, k)$}
    \label{alg:LoMAP}
    \begin{algorithmic}[1]
    \STATE \textbf{Input:} Noisy trajectory sample $\boldsymbol{\tau}^{i-1}$, offline dataset $\{\boldsymbol{\tau}^0_n\}_{n=1}^N$, number of neighbors $k$
    \STATE \textbf{Output:} Projected sample $\boldsymbol{\tau}^{i-1}$

    \NoNumber{\small{\color{gray}\texttt{// denoise the current sample}}}
    \STATE $\hat{\boldsymbol{\tau}}^{\,0 \mid (i-1)} \gets \frac{1}{\sqrt{\alpha_{i-1}}}\Bigl(\boldsymbol{\tau}^{i-1} - \sqrt{1-\alpha_{i-1}}\,\boldsymbol{\epsilon}_\theta(\boldsymbol{\tau}^{i-1})\Bigr)$
    
    \NoNumber{\small{\color{gray}\texttt{// $k$ nearest neighbors}}}
    \STATE $\mathcal{N} \gets \text{TopKNeighbors}\!\bigl(\hat{\boldsymbol{\tau}}^{\,0 \mid (i-1)};\,\{\boldsymbol{\tau}^0_n\}, k\bigr)$
    
    \NoNumber{\small{\color{gray}\texttt{// forward-diffuse neighbors}}}
    \FOR{$\boldsymbol{\tau}^0_{(n_j)} \in \mathcal{N}$}
        \STATE $\boldsymbol{\tau}^{i-1}_{(n_j)} \gets \sqrt{\alpha_{i-1}} \,\boldsymbol{\tau}^0_{(n_j)} \;+\; \sqrt{1-\alpha_{i-1}}\,\boldsymbol{\epsilon}_{(n_j)}$
    \ENDFOR
    
    \NoNumber{\small{\color{gray}\texttt{// PCA on the local neighborhood}}}
    \STATE $\boldsymbol{U} \gets \mathrm{PCA}\bigl(\{\boldsymbol{\tau}^{i-1}_{(n_j)}\}\bigr)$ 
    \NoNumber{\small{\color{gray}\texttt{// project onto local subspace}}}
    \STATE $\boldsymbol{\tau}^{i-1} \gets \boldsymbol{U}\,\boldsymbol{U}^\top\,\boldsymbol{\tau}^{i-1}$
    
    \STATE \textbf{return} $\boldsymbol{\tau}^{i-1}$
    \end{algorithmic}
\end{algorithm}
\end{minipage}
\hfill
\end{figure*}
}

In this section, we formalize the phenomenon of \emph{manifold deviation}, a critical limitation of diffusion planners caused by inexact guidance (\Cref{sec:limitations_inexact_guidance}), and propose \emph{Local Manifold Approximation and Projection} (LoMAP), a training-free method to preserve trajectory feasibility (\Cref{sec:lomap}).

\subsection{Manifold Deviation by Inexact Guidance}
\label{sec:limitations_inexact_guidance}
Recall from \cref{Eq:e_guided_sampling} that diffusion planners aim to sample trajectories biased toward those that have high returns by
sampling from the following \emph{energy-guided} distribution:
\begin{equation*}
\label{eq:tilted-distribution}
   \tilde{p}_\theta(\boldsymbol{\tau}^0)
   \;\;\propto\;\;
   p_\theta(\boldsymbol{\tau}^0)\,\exp\!\bigl[\mathcal{J}(\boldsymbol{\tau}^0)\bigr],
\end{equation*}
where \(p_\theta(\boldsymbol{\tau}^0)\) is the learned trajectory distribution 
from the diffusion model, and \(\mathcal{J}(\boldsymbol{\tau}^0)\) is the 
(negative) energy or return function to be maximized.

Following Theorem~3.1 in \cite{lu2023contrastive}, consider the forward process 
\(q(\boldsymbol{\tau}^i \mid \boldsymbol{\tau}^0)\). 
Then the marginal distribution at diffusion timestep~\(i\) is:
\begin{equation*}
\begin{split}
\tilde{p}_\theta({\bs \tau}^i) &= \int q({\bs \tau}^i | {\bs \tau}^0) \tilde{p}_\theta({\bs \tau}^0) \diff {\bs \tau}^0 \\
    &= \int q({\bs \tau}^i | {\bs \tau}^0) p_\theta({\bs \tau}^0)\frac{e^{\mathcal{J}({\bs \tau}^0)}}{Z} \diff {\bs \tau}^0 \\
    &= p_\theta({\bs \tau}^i)\int q({\bs \tau}^0 | {\bs \tau}^i)\frac{e^{\mathcal{J}({\bs \tau}^0)}}{Z} \diff {\bs \tau}^0 \\
    &= \frac{p_\theta({\bs \tau}^i)\E_{ q({\bs \tau}^0 | {\bs \tau}^i)} \left[e^{\mathcal{J}({\bs \tau}^0)}\right] }{Z} \\
    &\propto p_\theta(\boldsymbol{\tau}^i)\,\exp\bigl[\,\underbrace{
     \!\log \mathbb{E}_{q(\boldsymbol{\tau}^0|\boldsymbol{\tau}^i)}\Bigl[e^{\mathcal{J}({\bs \tau}^0)}\Bigr]
   }_{\displaystyle \mathcal{J}_{t}(\boldsymbol{\tau}^i)}\bigr].
\end{split}
\end{equation*}
Hence, the \emph{exact} intermediate guidance at diffusion timestep~\(i\) is given by the gradient of
\[
   \mathcal{J}_{t}(\boldsymbol{\tau}^i)
   \;=\;
   \log\,
   \mathbb{E}_{\,q(\boldsymbol{\tau}^0 \mid \boldsymbol{\tau}^i)}
      \bigl[\exp(\mathcal{J}(\boldsymbol{\tau}^0))\bigr].
\]  
By injecting this exact guidance into each reverse diffusion step, the sampled $\boldsymbol{\tau}^0$ exactly follows the desired distribution~\eqref{Eq:e_guided_sampling} by rewriting the score of
\begin{equation}
    \nabla_{\boldsymbol{\tau}^i}\log \tilde{p}_\theta({\bs \tau}^i) = \underbrace{\nabla_{\boldsymbol{\tau}^i}\log p_\theta({\bs \tau}^i)}_{\approx  -\boldsymbol{\epsilon}_\theta(\boldsymbol{\tau}^i)/\sqrt{1-\alpha_{i}}}
    - \underbrace{\nabla_{\boldsymbol{\tau}^i}\mathcal{J}_{t}({\bs \tau}^i)}_{\text{guidance}}
\end{equation}
where the first term is approximated by the learned noise-predictor model
\(\boldsymbol{\epsilon}_\theta\).

\vspace{0.5em}
\noindent
{\bf Guidance Gap.\;}
In practice, however, many existing diffusion planners \cite{janner2022planning, liang2023adaptdiffuser, chen2024simple} learn an approximate guidance function 
\(\,\mathcal{J}_\phi^\mathrm{MSE}(\boldsymbol{\tau}^i)\) 
via mean-square-error (MSE) objective:
\begin{equation}
\label{Eq:MSE_qt_objective_2}
    \min_{\phi} \mathbb{E}_{i,\boldsymbol{\epsilon},\boldsymbol{\tau}^0} \left[
    \|\mathcal{J}_\phi({\bs \tau}^i) - \mathcal{J}({\bs \tau}^0)\|_2^2
\right],
\end{equation}
where $\boldsymbol{\tau}^i=\sqrt{\alpha_i} {\bs \tau}^0+\sqrt{1-\alpha_i}\boldsymbol{\epsilon}$ and $\boldsymbol{\epsilon} \sim \mathcal{N}(\mathbf{0},\mathbf{I})$.

However, with sufficient model capacity, the optimal $\mathcal{J}_\phi$ under the MSE objective satisfies:
\begin{equation*}
\begin{aligned}
    \mathcal{J}_\phi^\text{MSE}(\boldsymbol{\tau}^i)
    &= \mathbb{E}_{\,q(\boldsymbol{\tau}^0 \,\mid\, \boldsymbol{\tau}^i)}
       \bigl[\mathcal{J}(\boldsymbol{\tau}^0)\bigr]
    \\
    &\leq
    \,\!\log \mathbb{E}_{q(\boldsymbol{\tau}^0|\boldsymbol{\tau}^i)}\Bigl[e^{\mathcal{J}({\bs \tau}^0)}\Bigr]
    =
    \mathcal{J}_{t}(\boldsymbol{\tau}^i),
\end{aligned}
\end{equation*}
where the inequality follows from Jensen’s inequality, implying that 
\(\mathcal{J}_\phi^\mathrm{MSE}\) \emph{underestimates} the desired quantity.

\begin{definition}
\label{eq:def_guidance_gap}
(Guidance gap).
Let 
\(\nabla_{\boldsymbol{\tau}^i}\,\mathcal{J}_t\bigl(\boldsymbol{\tau}^i\bigr)\)
denote the \emph{true} intermediate guidance at diffusion step~\(i\), and let
\(\nabla_{\boldsymbol{\tau}^i}\,\mathcal{J}_\phi^\mathrm{MSE}\bigl(\boldsymbol{\tau}^i\bigr)\)
be the \emph{estimated guidance} via an MSE-based objective. 
We define the \emph{guidance gap} at \(\boldsymbol{\tau}^i\) by
\begin{equation}
\label{eq:guidance_gap}
   \Delta_{\mathrm{guidance}}\bigl(\boldsymbol{\tau}^i\bigr)
   =
   \bigl\|\,
   \nabla_{\boldsymbol{\tau}^i}\,\mathcal{J}_{t}\bigl(\boldsymbol{\tau}^i\bigr)
   -
   \nabla_{\boldsymbol{\tau}^i}\,\mathcal{J}_\phi^\mathrm{MSE}\bigl(\boldsymbol{\tau}^i\bigr)
   \bigr\|_2.
\end{equation}
\end{definition}

To study how inaccuracies in energy guidance grow with dimensionality, we introduce the \emph{guidance gap} in~\eqref{eq:guidance_gap}.
\Cref{prop:dim_gap} shows that this gap has a lower bound on the order of
\(\sqrt{d}\) in high-dimensional regimes. 

\begin{restatable}{proposition}{Firstprop}\label{prop:dim_gap}
(Dimensional scaling of guidance gap.)
Suppose \(\mathcal{J}(\boldsymbol{\tau}^0)\) is not constant. Given the \emph{true} guidance
\[
  \nabla_{\boldsymbol{\tau}^i}\,\mathcal{J}_t(\boldsymbol{\tau}^i)
  =
  \frac{\mathbb{E}_{\,q(\boldsymbol{\tau}^0 | \boldsymbol{\tau}^i)}\!\Bigl[
     e^{\,\mathcal{J}(\boldsymbol{\tau}^0)}\,\nabla_{\boldsymbol{\tau}^i}\!\log q(\boldsymbol{\tau}^0 | \boldsymbol{\tau}^i)
  \Bigr]}{
  \mathbb{E}_{\,q(\boldsymbol{\tau}^0 | \boldsymbol{\tau}^i)}\!\bigl[e^{\,\mathcal{J}(\boldsymbol{\tau}^0)}\bigr]},
\]
and the \emph{MSE-based} guidance
\[
  \nabla_{\boldsymbol{\tau}^i}\,\mathcal{J}_\phi^{\mathrm{MSE}}(\boldsymbol{\tau}^i)
  =
  \mathbb{E}_{\,q(\boldsymbol{\tau}^0 | \boldsymbol{\tau}^i)}\!\Bigl[
    \mathcal{J}(\boldsymbol{\tau}^0)\,\nabla_{\boldsymbol{\tau}^i}\!\log q(\boldsymbol{\tau}^0 | \boldsymbol{\tau}^i)
  \Bigr],
\]
there exists a choice of \(\boldsymbol{\tau}^i\) such that
\[
  \Bigl\|\,
   \nabla_{\boldsymbol{\tau}^i}\,\mathcal{J}_t(\boldsymbol{\tau}^i)
   -
   \nabla_{\boldsymbol{\tau}^i}\,\mathcal{J}_\phi^{\mathrm{MSE}}(\boldsymbol{\tau}^i)
  \Bigr\|_2
  \ge
  \frac{\,c\,}{\sqrt{1-\alpha_i}}\sqrt{d},
\]
for some constant \(c > 0\) that does not depend on $d$.
\end{restatable}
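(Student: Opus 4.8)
The plan is to reduce the guidance gap to a single closed-form vector and then exhibit a choice of $\boldsymbol{\tau}^i$ that makes its norm of order $\sqrt{d}/\sqrt{1-\alpha_i}$. First I would put both guidances in a common form. Write $\boldsymbol{g}(\boldsymbol{\tau}^0):=\nabla_{\boldsymbol{\tau}^i}\log q(\boldsymbol{\tau}^0\mid\boldsymbol{\tau}^i)$; by Bayes' rule applied to the Gaussian forward kernel $q(\boldsymbol{\tau}^i\mid\boldsymbol{\tau}^0)=\mathcal{N}(\sqrt{\alpha_i}\boldsymbol{\tau}^0,(1-\alpha_i)\mathbf{I})$ one gets $\boldsymbol{g}(\boldsymbol{\tau}^0)=\tfrac{\sqrt{\alpha_i}}{1-\alpha_i}\boldsymbol{\tau}^0+\boldsymbol{c}$, where $\boldsymbol{c}:=-\tfrac{\boldsymbol{\tau}^i}{1-\alpha_i}-\nabla_{\boldsymbol{\tau}^i}\log p_\theta(\boldsymbol{\tau}^i)$ is free of $\boldsymbol{\tau}^0$. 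Substituting into the two expressions in the statement — which read $\nabla_{\boldsymbol{\tau}^i}\mathcal{J}_t=\mathbb{E}_q[e^{\mathcal{J}}\boldsymbol{g}]/\mathbb{E}_q[e^{\mathcal{J}}]$ and $\nabla_{\boldsymbol{\tau}^i}\mathcal{J}_\phi^{\mathrm{MSE}}=\mathbb{E}_q[\mathcal{J}\boldsymbol{g}]$ with $q=q(\cdot\mid\boldsymbol{\tau}^i)$ — and using the score identity $\mathbb{E}_q[\boldsymbol{g}]=\mathbf{0}$ (hence $\boldsymbol{c}=-\tfrac{\sqrt{\alpha_i}}{1-\alpha_i}\mathbb{E}_q[\boldsymbol{\tau}^0]$) to cancel the $\boldsymbol{c}$-terms, the gap collapses to
\[
 \nabla_{\boldsymbol{\tau}^i}\mathcal{J}_t-\nabla_{\boldsymbol{\tau}^i}\mathcal{J}_\phi^{\mathrm{MSE}}
 =\frac{\sqrt{\alpha_i}}{1-\alpha_i}\Bigl(\bigl(\mathbb{E}_{\tilde q}[\boldsymbol{\tau}^0]-\mathbb{E}_q[\boldsymbol{\tau}^0]\bigr)-\mathrm{Cov}_q(\mathcal{J},\boldsymbol{\tau}^0)\Bigr),\qquad \tilde q\propto e^{\mathcal{J}}q .
\]
The two bracketed terms — the exponential-tilt shift of the posterior mean and its linear-response surrogate — agree to first order under $\mathcal{J}\mapsto\lambda\mathcal{J}$ (both equal $\lambda\,\mathrm{Cov}_q(\mathcal{J},\boldsymbol{\tau}^0)+O(\lambda^2)$), so the gap is precisely the higher-order nonlinearity of the tilt and is nonzero whenever $\mathcal{J}$ is non-constant.

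Next I would exhibit the configuration. Since $\mathcal{J}$ is non-constant, fix two points $\boldsymbol{a},\boldsymbol{b}\in\mathrm{supp}(p_\theta)$ with $\Delta:=\mathcal{J}(\boldsymbol{a})-\mathcal{J}(\boldsymbol{b})\neq0$ and $\|\boldsymbol{a}-\boldsymbol{b}\|=\Omega(\sqrt{d})$, and take $\boldsymbol{\tau}^i$ equidistant from $\sqrt{\alpha_i}\boldsymbol{a}$ and $\sqrt{\alpha_i}\boldsymbol{b}$ and pushed far along a direction orthogonal to $\boldsymbol{a}-\boldsymbol{b}$, so the Gaussian factor forces $q(\cdot\mid\boldsymbol{\tau}^i)$ to concentrate with (nearly) equal weights on $\boldsymbol{a}$ and $\boldsymbol{b}$ and negligible mass elsewhere. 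For the exact weight-$(\tfrac12,\tfrac12)$ two-atom law on $\{\boldsymbol{a},\boldsymbol{b}\}$, a direct computation gives $\mathrm{Cov}_q(\mathcal{J},\boldsymbol{\tau}^0)=\tfrac14\Delta(\boldsymbol{a}-\boldsymbol{b})$ and $\mathbb{E}_{\tilde q}[\boldsymbol{\tau}^0]-\mathbb{E}_q[\boldsymbol{\tau}^0]=(\sigma(\Delta)-\tfrac12)(\boldsymbol{a}-\boldsymbol{b})$ with $\sigma$ the logistic function, so the bracket equals $h(\Delta)(\boldsymbol{a}-\boldsymbol{b})$ where $h(x):=\sigma(x)-\tfrac12-\tfrac14x$; since $h'(x)=\sigma(x)(1-\sigma(x))-\tfrac14\le0$ with equality only at $x=0$, $h$ is strictly decreasing, so $c_0:=|h(\Delta)|>0$ is a constant depending only on $\mathcal{J}$. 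Discharging the lower-order correction from the non-atomic part of the posterior then yields $\Delta_{\mathrm{guidance}}(\boldsymbol{\tau}^i)\ge\tfrac{\sqrt{\alpha_i}}{1-\alpha_i}\,\tfrac{c_0}{2}\,\|\boldsymbol{a}-\boldsymbol{b}\|$ for $d$ large, and writing $\tfrac{\sqrt{\alpha_i}}{1-\alpha_i}=\tfrac{\sqrt{\alpha_i}}{\sqrt{1-\alpha_i}}\cdot\tfrac{1}{\sqrt{1-\alpha_i}}$ and absorbing $\tfrac{\sqrt{\alpha_i}}{\sqrt{1-\alpha_i}}$, $c_0$, and the $\Omega(\sqrt{d})$ constant into a single $c>0$ (allowed to depend on $i$ and $\mathcal{J}$ but not on $d$) gives $\Delta_{\mathrm{guidance}}(\boldsymbol{\tau}^i)\ge\tfrac{c}{\sqrt{1-\alpha_i}}\sqrt{d}$.

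The delicate step is this last construction — concretely, securing $\|\boldsymbol{a}-\boldsymbol{b}\|=\Omega(\sqrt{d})$, equivalently arranging a posterior whose spread along directions where $\mathcal{J}$ varies is dimension-proportional. Non-constancy of $\mathcal{J}$ alone would only force a single nonzero coordinate of the gap; obtaining the full $\sqrt{d}$ requires the return to vary across an $\Omega(d)$-sized portion of trajectory space, which is the implicit high-dimensional premise here (for diffusion planners $d=\text{horizon}\times\dim(\mathcal{S}\times\mathcal{A})$, and offline trajectories routinely differ in $\Omega(d)$ coordinates). A secondary, more routine point is making the nearly-two-atom approximation rigorous so that the remaining posterior mass perturbs the bound only at order $o(\sqrt{d})$; this holds once $\boldsymbol{\tau}^i$ is pushed far enough out and $p_\theta$ places thin mass between the two clusters, but it does require a mild non-degeneracy assumption on $p_\theta$. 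Everything else — the Bayes/score manipulations of Step 1 and the final scalar bookkeeping — is routine.
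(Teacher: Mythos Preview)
Your proposal is correct, and while it lands on the same algebraic starting point as the paper --- your tilt-shift-minus-covariance form is exactly the paper's $-\tfrac{1}{\sqrt{1-\alpha_i}}\,\mathbb{E}_q\bigl[\delta(\boldsymbol{\tau}^0)\,\boldsymbol{\epsilon}\bigr]$ with $\delta=e^{\mathcal{J}}/\mathbb{E}_q[e^{\mathcal{J}}]-\mathcal{J}$, once one substitutes $\boldsymbol{\epsilon}=(\boldsymbol{\tau}^i-\sqrt{\alpha_i}\boldsymbol{\tau}^0)/\sqrt{1-\alpha_i}$ and uses the score identity --- the route to the $\sqrt{d}$ bound is genuinely different. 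The paper's argument is heuristic: it observes via Jensen that $\mathbb{E}_q[\delta]>0$, invokes $\|\boldsymbol{\epsilon}\|_2\approx\sqrt{d}$ from Gaussian concentration, and then asserts one can choose $\boldsymbol{\tau}^i$ so that ``$\delta$ aligns well with $\boldsymbol{\epsilon}$'', without actually constructing such a $\boldsymbol{\tau}^i$. Your two-atom construction is more explicit: by forcing the posterior to concentrate on $\{\boldsymbol{a},\boldsymbol{b}\}$ you reduce the gap to $h(\Delta)(\boldsymbol{a}-\boldsymbol{b})$ with $h(x)=\sigma(x)-\tfrac12-\tfrac{x}{4}$ strictly monotone, giving a clean nonvanishing scalar constant. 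The trade-off is that you must then secure $\|\boldsymbol{a}-\boldsymbol{b}\|=\Omega(\sqrt{d})$ and control the nearly-two-atom approximation --- hypotheses you flag openly, whereas the paper's ``alignment'' step hides the analogous requirement inside a one-line assertion. Your derivation also works with the general Bayes-rule posterior rather than the paper's flat-prior simplification $q(\boldsymbol{\tau}^0\mid\boldsymbol{\tau}^i)=\mathcal{N}(\boldsymbol{\tau}^i/\sqrt{\alpha_i},\cdot)$. In short, the paper's proof is shorter but is effectively a sketch; yours is longer but makes both the dimensional dependence and the implicit $\Omega(\sqrt{d})$-spread premise transparent.
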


\begin{sproof}
The difference between the true guidance and the MSE-based guidance can be expressed
as an expectation involving 
\(\delta(\boldsymbol{\tau}^0)\coloneqq 
\left(\frac{e^{\mathcal{J}(\boldsymbol{\tau}^0)}}{\mathbb{E}[\,e^{\mathcal{J}(\boldsymbol{\tau}^0)}]}
-\mathcal{J}(\boldsymbol{\tau}^0)\right)\)
times the forward-process noise~\(\boldsymbol{\epsilon}\). 
By Jensen's inequality, \(\delta(\boldsymbol{\tau}^0)\) has positive mean, indicating that the MSE-based guidance underestimates the exponential-weighted return.
Exploiting the typical behavior \(\|\boldsymbol{\epsilon}\|_2 \approx \sqrt{d}\) in high dimensions and choosing \(\boldsymbol{\tau}^i\) so that \(\delta\) aligns well with \(\boldsymbol{\epsilon}\), we derive the guidance gap scaling on the order of \(\sqrt{d}\). For the complete proof, see Appendix \ref{A_Proofs}.
\end{sproof}

Consequently, as indicated by \Cref{prop:dim_gap}, this issue becomes more severe in scenarios involving long planning horizons and high-dimensional state and action spaces.
The substantial guidance gap forces $\boldsymbol{\tau}^{i-1}$ to drift from the intermediate data manifold $\mathcal{M}_{i-1}$, leading sampled trajectories away from the feasible manifold a problem we refer to as \emph{manifold deviation}. Figure~\ref{fig:ant_comp} provides empirical evidence of this issue.

\subsection{Local Manifold Approximation and Projection}
\label{sec:lomap}

As explained in Section~\ref{sec:limitations_inexact_guidance}, inaccuracies in the energy-guided update can cause the sample $\boldsymbol{\tau}^i$ to deviate from the underlying data manifold.
To mitigate manifold deviation caused by inexact guidance, we propose \textbf{LoMAP} - a training-free method that projects guided samples back to the data manifold through local low-rank approximations. The key insight is that while intermediate diffusion samples $\boldsymbol{\tau}^i$ may deviate from the manifold, their \emph{denoised estimates} can guide local manifold approximation using the offline dataset. 

\paragraph{Manifold-aware guidance.}
Given a trajectory sample \(\boldsymbol{\tau}^{i}\), 
we sample \(\boldsymbol{\tau}^{i-1}\) with \emph{manifold-aware guidance} in two steps:
\begin{equation}
\label{eq:lomap_projection_1}
\begin{aligned}
    \boldsymbol{\tau}^{i-1} 
    &\sim
    \mathcal{N}\Bigl(\,\boldsymbol{\mu}_\theta(\boldsymbol{\tau}^{i}) 
    + \omega\,\boldsymbol{\Sigma}^i\,g,\;
    \boldsymbol{\Sigma}^i\Bigr),
\end{aligned}
\end{equation}
\begin{equation}
\label{eq:lomap_projection_2}
\begin{aligned}
    \boldsymbol{\tau}^{i-1}
    &\leftarrow
    \mathcal{P}_{\mathcal{T}_{\boldsymbol{\tau}^{i-1}}\,\mathcal{M}_{i-1}}\!\bigl(\boldsymbol{\tau}^{i-1}\bigr),
\end{aligned}
\end{equation}
where \(g=\nabla_{\boldsymbol{\tau}^{i-1}}\,\mathcal{J}_\phi^\mathrm{MSE}\!\bigl(\boldsymbol{\tau}^{i-1}\bigr)\) is the gradient-based guidance term, 
\(\omega\) is the guidance scale, and 
\(\mathcal{P}_{\mathcal{T}_{\boldsymbol{\tau}^{i-1}}\,\mathcal{M}_{i-1}}\) denotes projection onto the local manifold.
\eqref{eq:lomap_projection_1} applies a reward-guided shift to sample \(\boldsymbol{\tau}^{i-1}\),
while \eqref{eq:lomap_projection_2} \emph{projects} \(\boldsymbol{\tau}^{i-1}\) onto the low-dimensional subspace derived from the offline dataset, mitigating drift away from feasible trajectories.

\paragraph{Approximating the local manifold.}
We estimate \(\mathcal{T}_{\boldsymbol{\tau}^{i-1}}\,\mathcal{M}_{i-1}\) using a \emph{local} low-rank approximation from the offline dataset of feasible trajectories.  
To mitigate noise, we first form a denoised surrogate
\[
    \hat{\boldsymbol{\tau}}^{0 \mid {i-1}} 
    =
    \frac{1}{\sqrt{\alpha_{i-1}}}\bigl(\boldsymbol{\tau}^{i-1}
    -
    \sqrt{1-\alpha_{i-1}}\,\boldsymbol{\epsilon}_\theta(\boldsymbol{\tau}^{i-1})\bigr),
\]
using Tweedie's formula (Eq.~\ref{eq:tweedie_gaussian}), where \(\boldsymbol{\epsilon}_\theta\) is the trained noise-prediction network.  
We then retrieve \(k\) nearest neighbors of \(\hat{\boldsymbol{\tau}}^{0 \mid {i-1}} \) from the \emph{clean} offline trajectories, $\{\boldsymbol{\tau}^0_{(n_j)}\}_{j=1}^k$, using cosine similarity in trajectory space following~\cite{feng2024resisting}. Next, we \emph{forward diffuse} these clean neighbors to timestep~\(i-1\):
\[
    \boldsymbol{\tau}^{i-1}_{(n_j)}
    =
    \sqrt{\alpha_{i-1}}\,\boldsymbol{\tau}^0_{(n_j)}
    +
    \sqrt{1-\alpha_{i-1}}\,\boldsymbol{\epsilon}_{(n_j)},
    \quad 
    \boldsymbol{\epsilon}_{(n_j)}\sim\mathcal{N}(\mathbf{0},\mathbf{I}).
\]
Because each \(\boldsymbol{\tau}^{i-1}_{(n_j)}\) remains close to the manifold at timestep~\(i-1\), these \(k\) samples approximate the local neighborhood \(\mathcal{M}_{i-1}\).  
We then perform a rank-\(r\) PCA on \(\{\boldsymbol{\tau}^{i-1}_{(n_j)}\}_{j=1}^{k}\) to obtain an orthonormal basis \(\boldsymbol{U}\in\mathbb{R}^{d\times r}\). The matrix 
$U$ spans an $r$-dimensional subspace that approximates 
$\mathcal{T}_{\boldsymbol{\tau}^{i-1}}\mathcal{M}_{i-1}$. 
Thus,
\[
    \mathcal{P}_{\mathcal{T}_{\boldsymbol{\tau}^{i-1}}\,\mathcal{M}_{i-1}}(\mathbf{z})
    =
    \boldsymbol{U}\,\boldsymbol{U}^\top\,\mathbf{z},
\]
which retains only the principal directions of variation supported by the offline data. In practice, $r \ll d$, and we choose $r$ by retaining the principal components that explain at least a fraction~$\lambda$ of the total variance. In practice, we find that setting $\lambda=0.99$ works well. Pseudocode for the manifold-aware planning method is provided in Algorithm~\ref{alg:lomap_planning}. 
Notably, our LoMAP module is entirely training-free and can be readily integrated into existing diffusion planners by simply adding a manifold-projection step after each reward-guided update. For implementation details, including efficient manifold approximation and projection, see Appendix~\ref{Prac_Impl}.

\begin{figure}[t]
    \centering
    \includegraphics[width=.95\linewidth]{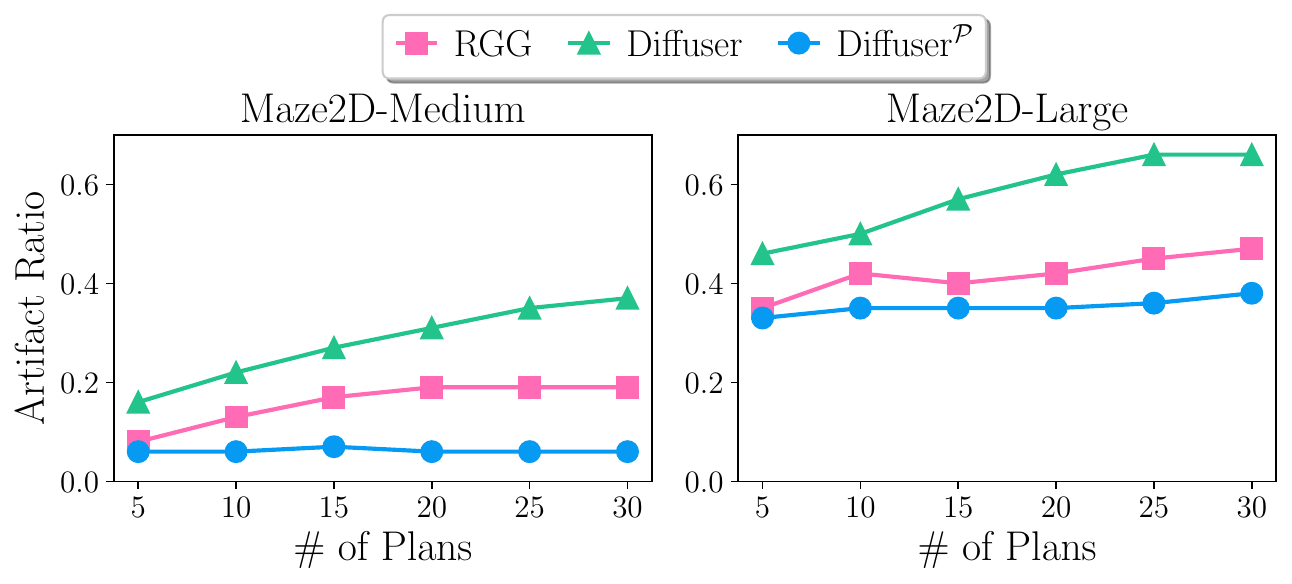}
    \vspace{-0.3cm}
    \caption{Artifact ratios on Maze2D-Medium (left) and Maze2D-Large (right) as the number of sampled plans increases. The y-axis denotes the fraction of trajectories that pass through walls, making them infeasible. Across both tasks, our LoMAP augmented $\text{Diffuser}^{\mathcal{P}}$ consistently produces fewer artifact plans compared to Diffuser and RGG.}
    \vspace{-0.3cm}
    \label{fig:manifold_deviation}
\end{figure}

\section{Experiments}
\label{sec:experiments}

In this section, we present experimental results showing that augmenting prior diffusion planners with LoMAP improves planning performance across a variety of offline control tasks. Specifically, we demonstrate \textbf{(1)} that LoMAP effectively mitigates manifold deviation and filters out artifact trajectories, \textbf{(2)} that it further enhances planning performance when integrated into diffusion planner, and \textbf{(3)} that LoMAP, as a plug-and-play module, can be seamlessly incorporated into hierarchical diffusion planners, enabling successful planning in the challenging AntMaze domain. Additional details regarding our experimental setup and implementation are provided in Appendix~\ref{C_Details}.

\subsection{Mitigating Manifold Deviation}

\begin{figure}[t]
    \centering
    \begin{subfigure}[b]{0.325\linewidth}
        \centering
        \includegraphics[width=\linewidth]{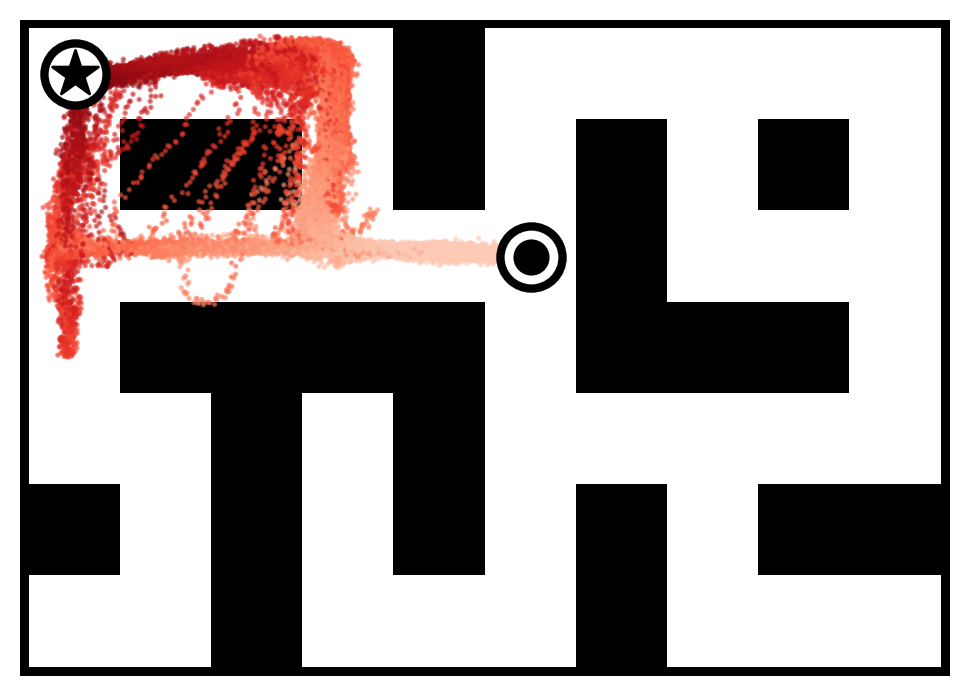}
        \caption{Diffuser}
        \label{subfig:comp_diffuser}
    \end{subfigure}
    \hfill
    \begin{subfigure}[b]{0.325\linewidth}
        \centering
        \includegraphics[width=\linewidth]{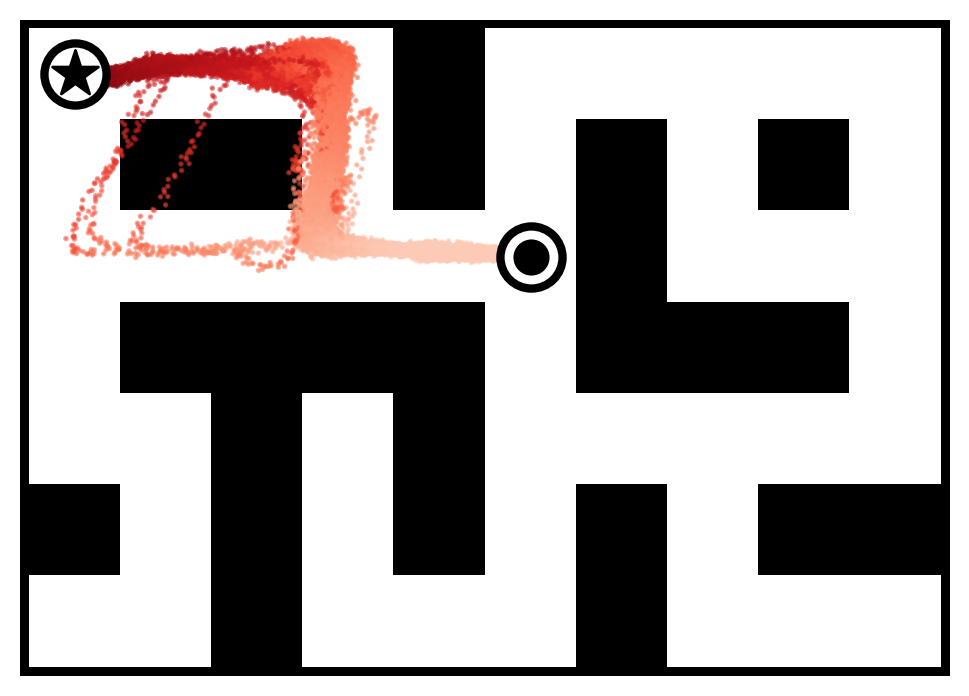}
        \caption{RGG}
        \label{subfig:comp_rgg}
    \end{subfigure}
    \hfill
    \begin{subfigure}[b]{0.325\linewidth}
        \centering
        \includegraphics[width=\linewidth]{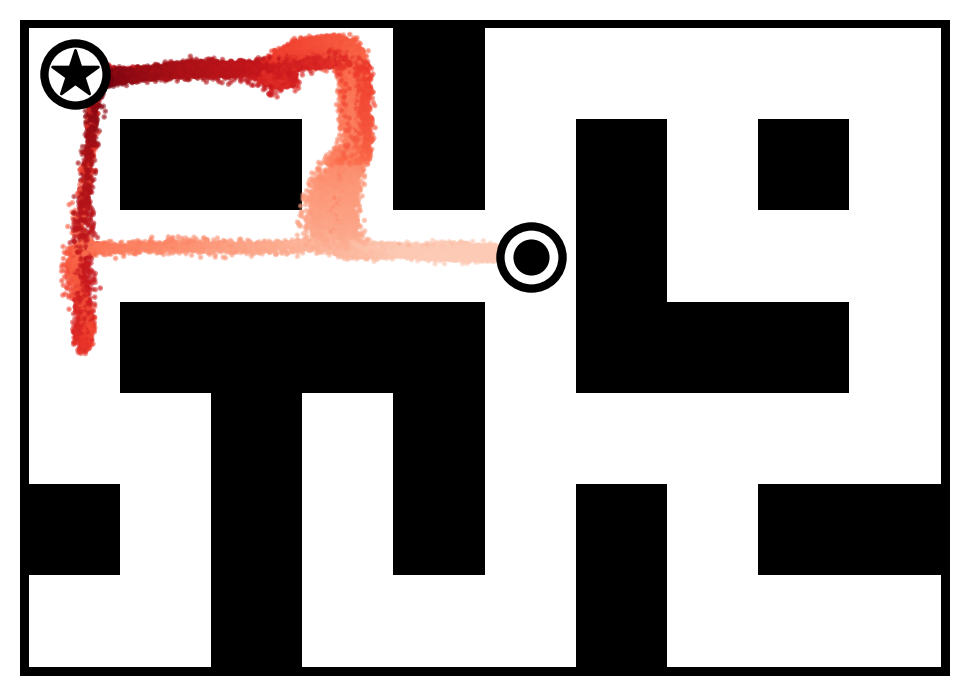}
        \caption{$\text{Diffuser}^{\mathcal{P}}$ (\textbf{ours})}
        \label{subfig:comp_lomap}
    \end{subfigure}
    \caption{
    Visualization of 100 sampled trajectories from Diffuser, RGG, and $\text{Diffuser}^{\mathcal{P}}$ in Maze2D, under a specified start
    \protect{\raisebox{-.05cm}{\includegraphics[height=.35cm]{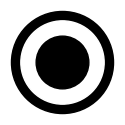}}}
    and goal
    \protect{\raisebox{-.05cm}{\includegraphics[height=.35cm]{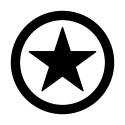}}}
    condition.
    }
    \vspace{-0.7cm}
    \label{fig:maze_comp}
\end{figure}

\label{sec:mitigating_mani_issue}
To investigate whether LoMAP effectively mitigates manifold deviation in diffusion-based planners, we apply it to Diffuser~\cite{janner2022planning} and refer to the resulting method as $\text{Diffuser}^{\mathcal{P}}$. We conduct a quantitative evaluation of manifold deviation in Maze2D tasks by leveraging an oracle that identifies \emph{artifact plans}, defined as trajectories that pass through walls and are thus physically infeasible. We compare $\text{Diffuser}^{\mathcal{P}}$ against Diffuser and a baseline variant, Restoration Gap Guidance (RGG) \cite{lee2023refining}, in tasks where the planner is given only a start and goal location. Specifically, we randomly select start and goal states and generate trajectories under increasing sample sizes. For each start-goal pair, if at least one sampled trajectory contains invalid transitions through walls, we mark that pair as exhibiting manifold deviation. As shown in Figure~\ref{fig:manifold_deviation}, Diffuser produces some infeasible transitions, especially in the more complex Maze2D-Large environment. Although RGG partially alleviates this issue, $\text{Diffuser}^{\mathcal{P}}$ demonstrates the highest reliability, consistently generating valid trajectories even when a larger number of plans are drawn.

This phenomenon is further illustrated in Figure~\ref{fig:maze_comp}. Although RGG removes many artifact plans, it also reduces the diversity of solutions, clustering trajectories near a narrower set of paths. By contrast, $\text{Diffuser}^{\mathcal{P}}$ maintains high reliability and diversity, producing physically feasible trajectories without sacrificing coverage of the solution space.

\subsection{Enhancing Planning Performance}
\paragraph{Maze2D.} 
To further demonstrate how LoMAP improves planning performance, we evaluate it on Maze2D environments \cite{fu2020d4rl}, which involve navigating an agent to a target goal location through complex mazes requiring long-horizon planning. Maze2D features two distinct tasks: a single-task setup where the goal location is fixed, and a multi-task variant (Multi2D) in which the goal is randomized at the start of each episode. We compare our methods against the model-free offline RL algorithm IQL~\cite{kostrikov2021offline} and two trajectory-refinement approaches for diffusion planners, RGG~\cite{lee2023refining} and TAT~\cite{feng2024resisting}.

As shown in Table~\ref{tab:maze2d_result}, the model-free IQL suffers a notable performance drop under multi-task conditions, likely due to the challenges of credit assignment. By contrast, diffusion-based planners perform well in both single-task and multi-task settings. LoMAP effectively reduces manifold deviation (\cref{sec:mitigating_mani_issue}) and provides an additional performance boost, with $\text{Diffuser}^{\mathcal{P}}$ achieving the best results on 4 out of 6 tasks, showing especially strong improvements in Maze2D-Large, which features more complex obstacle maps.

\begin{table}[t]
   \caption{Comparison on Maze2D for $\text{Diffuser}^{\mathcal{P}}$, Diffuser, and prior methods. $\text{Diffuser}^{\mathcal{P}}$ denotes Diffuser augmented with LoMAP. We report the mean and the standard error over 1000 planning seeds.}
   \vspace{-0.4cm}
   \label{tab:maze2d_result}
   \vskip 0.1in
   \begin{center}
   \begin{small}
    \resizebox{\columnwidth}{!}{%
      \begin{tabular}{@{\,\,}l@{\,\,}c@{\,\,\,\,}c@{\,\,\,\,}c@{\,\,\,\,}c@{\,\,\,\,}l@{\,}}
         \toprule
         \multicolumn{1}{c}{\textbf{Environment}}
         & \textbf{IQL} 
         & \textbf{RGG} 
         & \textbf{TAT} 
         & \textbf{Diffuser} 
         & $\textbf{Diffuser}^{\mathcal{P}}$ \\
         \midrule
         \phantom{Maze2d \ \ }U-Maze   
         & 47.4              
         & 108.8             
         & 114.5              
         & 113.9            %
         & \textbf{126.0}{\tiny$\pm$0.26} \\
         Maze2d \ \ Medium   
         & 34.9              
         & \textbf{131.8}             %
         & 130.7              
         & 121.5
         & 131.0{\tiny$\pm$0.46} \\
         \phantom{Maze2d \ \ }Large    
         & 58.6              
         & 135.4             %
         & 133.4              
         & 123.0             %
         & \textbf{151.9}{\tiny$\pm$2.66} \\
         \midrule
         \textbf{Single-task Average} 
         & 47.0              
         & 125.3             %
         & 126.2              
         & 119.5             %
         & \textbf{136.3}\\
         \midrule
         \phantom{Multi2d \ \ }U-Maze   
         & 24.8              
         & 128.3             %
         & 129.4             
         & 128.9             %
         & \textbf{133.1}{\tiny$\pm$0.41} \\
         Multi2d \ \ Medium   
         & 12.1              
         & 130.0             %
         & \textbf{135.4}             
         & 127.2             %
         & 129.1{\tiny$\pm$0.89} \\
         \phantom{Multi2d \ \ }Large   
         & 13.9              
         & 148.3             %
         & 143.8            
         & 132.1             %
         & \textbf{154.7}{\tiny$\pm$2.79} \\
         \midrule
         \textbf{Multi-task Average} 
         & 16.9              
         & 136.4             %
         & 136.2             
         & 129.4            %
         & \textbf{138.9}\\
         \bottomrule
      \end{tabular}
    }
   \end{small}
   \end{center}
   \vspace{-0.7cm}
\end{table}

\paragraph{Locomotion.}
We next evaluate LoMAP-incorporated planners on MuJoCo locomotion tasks~\cite{fu2020d4rl}, a standard benchmarks for assessing performance on heterogeneous, varying-quality datasets. Our comparison includes model-free algorithms (CQL~\cite{kumar2020conservative}, IQL~\cite{kostrikov2021offline}), model-based algorithms (MOPO~\cite{yu2020mopo}, MOReL~\cite{kidambi2020morel}), and sequence modeling approaches (Decision Transformer (DT)~\cite{chen2021decision}, Trajectory Transformer (TT)~\cite{janner2021offline}). As baseline diffusion planners, we consider Diffuser~\cite{janner2022planning}, RGG~\cite{lee2023refining}, TAT~\cite{feng2024resisting}, and a conditional variant, Decision Diffuser (DD)~\cite{ajay2023is}.

As shown in Table~\ref{tab:locomotion_result}, incorporating LoMAP consistently boosts average returns of Diffuser across all tasks, with particularly strong gains in the Medium dataset, which poses a suboptimal and challenging distribution for learning both the diffusion planner and return estimator. Moreover, LoMAP-incorporated planners outperform other trajectory-refinement methods, highlighting the benefits of addressing manifold deviation during sampling.

\begin{table*}[t]
\caption{
Performance comparison of $\text{Diffuser}^{\mathcal{P}}$ and various prior methods on MuJoCo locomotion tasks, reported as normalized average returns with corresponding standard errors over 50 planning seeds.}
\vspace{-0.4cm}
\label{tab:locomotion_result}
\vskip 0.1in
\begin{center}
\begin{small}
\resizebox{\textwidth}{!}{%
\begin{tabular}{@{\,\,}l@{\,\,\,}l@{\,\,\,}c@{\,\,\,}c@{\,\,\,}c@{\,\,\,}c@{\,\,\,}c@{\,\,\,}c@{\,\,\,}c@{\,\,\,}c@{\,\,\,}c@{\,\,\,}c@{\,\,\,}c@{\,\,\,}c@{\,\,}}
\toprule
\textbf{Dataset} & \textbf{Environment} 
& \textbf{BC} & \textbf{CQL} & \textbf{IQL} & \textbf{DT} & \textbf{TT} 
& \textbf{MOPO} & \textbf{MOReL} 
& \textbf{DD} & \textbf{TAT} & \textbf{RGG} & \textbf{Diffuser} & $\textbf{Diffuser}^{\mathcal{P}}$ \\
\midrule
\multirow{3}{*}{Med-Expert}
& HalfCheetah
  & 55.2
  & 91.6
  & 86.7
  & 86.8
  & 95.0
  & 63.3
  & 53.3
  & 90.6  
  & 92.5
  & 90.8
  & 88.9  
  & 91.1{\tiny$\pm$0.23}
\\
& Hopper
  & 52.5
  & 105.4
  & 91.5
  & 107.6
  & 110.0
  & 23.7
  & 108.7
  & 111.8
  & 109.4
  & 109.6
  & 103.3
  & 110.6{\tiny$\pm$0.29}
\\
& Walker2d
  & 107.5
  & 108.8
  & 109.6
  & 108.1
  & 101.9
  & 44.6
  & 95.6
  & 108.8
  & 108.8
  & 107.8
  & 106.9
  & 109.2{\tiny$\pm$0.05}
\\
\midrule
\multirow{3}{*}{Medium}
& HalfCheetah
  & 42.6
  & 44.0
  & 47.4
  & 42.6
  & 46.9
  & 42.3
  & 42.1
  & 49.1
  & 44.3
  & 44.0
  & 42.8
  & 45.4{\tiny$\pm$0.13}
\\
& Hopper
  & 52.9
  & 58.5
  & 66.3
  & 67.6
  & 61.1
  & 28.0
  & 95.4
  & 79.3
  & 82.6
  & 82.5
  & 74.3
  & 93.7{\tiny$\pm$1.54}
\\
& Walker2d
  & 75.3
  & 72.5
  & 78.3
  & 74.0
  & 79.0
  & 17.8
  & 77.8
  & 82.5
  & 81.0
  & 81.7
  & 79.6
  & 79.9{\tiny$\pm$1.21}
\\
\midrule
\multirow{3}{*}{Med-Replay}
& HalfCheetah
  & 36.6
  & 45.5
  & 44.2
  & 36.6
  & 41.9
  & 53.1
  & 40.2
  & 39.3
  & 39.2
  & 41.0
  & 37.7
  & 39.1{\tiny$\pm$0.99}
\\
& Hopper
  & 18.1
  & 95.0
  & 94.7
  & 82.7
  & 91.5
  & 67.5
  & 93.6
  & 100
  & 95.3
  & 95.2
  & 93.6
  & 97.6{\tiny$\pm$0.58}
\\
& Walker2d
  & 26.0
  & 77.2
  & 73.9
  & 66.6
  & 82.6
  & 39.0
  & 49.8
  & 75
  & 78.2
  & 78.3
  & 70.6
  & 78.7{\tiny$\pm$2.2}
\\
\midrule
\multicolumn{2}{c}{\textbf{Average}}
  & 51.9
  & 77.6
  & 77.0
  & 74.7
  & 78.9
  & 42.1
  & 72.9
  & 81.8
  & 81.3
  & 81.2
  & 77.5
  & \textbf{82.8}
\\
\bottomrule
\end{tabular}
}
\end{small}
\end{center}
\vskip -0.15in
\end{table*}

\subsection{Scaling to Hierarchical Planning in AntMaze}
\label{sec:antmaze_experiments}

\begin{table}[t]
\caption{
Performance comparison of $\text{Diffuser}^{\mathcal{P}}$, $\text{HD}^{\mathcal{P}}$, and prior approaches on AntMaze tasks, reported as normalized average returns with corresponding standard errors over 150 planning seeds. }
\vspace{-0.5cm}
\label{tab:antmaze_result}
\vskip 0.1in
\begin{center}
\begin{small}
\resizebox{\columnwidth}{!}{%
\begin{tabular}{@{\,\,}l@{\,\,}l@{\,\,}c@{\,\,}c@{\,\,}c@{\,\,}c@{\,\,}c@{\,\,}l@{\,}}
\toprule
\textbf{Dataset} & \textbf{Env} 
& \textbf{DD} & \textbf{RGG} & \textbf{Diffuser} & $\textbf{Diffuser}^{\mathcal{P}}$
& \textbf{HD} & $\textbf{HD}^{\mathcal{P}}$ \\
\midrule

\multirow{2}{*}{Play}
  & Medium 
    & 8.0  
    & 17.3  
    & 6.7 
    & 40.7{\tiny$\pm$4.3} 
    & 42.0 
    & \textbf{92.7}{\tiny$\pm$7.32} \\

  & Large
    & 0.0 
    & 12.7 
    & 17.3 
    & 20.7{\tiny$\pm$3.8} 
    & 54.7
    & \textbf{74.0}{\tiny$\pm$6.2} \\

\midrule

\multirow{2}{*}{Diverse}
  & Medium
    & 4.0 
    & 25.3 
    & 2.0 
    & 36.0{\tiny$\pm$3.7} 
    & 78.7 
    & \textbf{98.0}{\tiny$\pm$6.1} \\

  & Large
    & 0.0 
    & 17.3 
    & 27.3 
    & 39.3{\tiny$\pm$2.5} 
    & 46.0 
    & \textbf{82.0}{\tiny$\pm$5.3} \\
\midrule

\multicolumn{2}{c}{\textbf{Average}}
    & 3.0 
    & 18.2 
    & 13.3 
    & 34.2 
    & 55.3 
    & \textbf{86.7} \\
\bottomrule
\end{tabular}
}
\end{small}
\end{center}
\vspace{-0.7cm}
\end{table}


The AntMaze tasks~\cite{fu2020d4rl} pose a substantial challenge due to high-dimensional state and action spaces, long-horizon navigation objectives, and sparse rewards. Generating entire trajectories often results in infeasible plans in these environments. A promising approach is to adopt a hierarchical scheme, wherein a high-level diffusion planner proposes subgoals and a low-level diffusion planner executes short-horizon trajectories to move the agent from one subgoal to the next.

Building on this idea, we incorporate LoMAP into both Diffuser \cite{janner2022planning} and Hierarchical Diffuser (HD)~\cite{chen2024simple}, yielding $\text{Diffuser}^{\mathcal{P}}$ and $\text{HD}^{\mathcal{P}}$. In $\text{HD}^{\mathcal{P}}$, a high-level diffusion model augmented with LoMAP generates subgoals for each trajectory segment. Subsequently, a short-horizon diffusion model (Diffuser) translates these subgoals into lower-level actions. We compare \(\text{Diffuser}^{\mathcal{P}}\) and \(\text{HD}^{\mathcal{P}}\) against standard Diffuser \cite{janner2022planning}, Hierarchical Diffuser (HD) \cite{chen2024simple}, Restoration Gap Guidance (RGG)~\cite{lee2023refining}, and Decision Diffuser (DD)~\cite{ajay2023is}.

As shown in Table~\ref{tab:antmaze_result}, $\text{Diffuser}^{\mathcal{P}}$ improves upon Diffuser across all AntMaze tasks, demonstrating ability of LoMAP to maintain manifold feasibility even in high-dimensional continuous control. Notably, $\text{HD}^{\mathcal{P}}$ achieves the best results on every variant of AntMaze, substantially outperforming the original HD. We attribute these gains largely to the correction of manifold deviation during high-level planning by LoMAP. In standard HD, subgoals generated by the high-level planner can sometimes lie off-manifold, forcing the low-level planner to produce infeasible trajectories. As illustrated in Figure~\ref{fig:ant_comp}, these subgoals frequently pass through maze walls, leading to invalid paths. By applying LoMAP to refine them, $\text{HD}^{\mathcal{P}}$ ensures that each proposed subgoal is more feasible for the low-level planner, thereby boosting overall success rates. These improvements are especially pronounced in larger mazes, where longer horizons and intricate navigation paths make adherence to a valid manifold particularly critical. Consequently, LoMAP serves as a plug-and-play component that boosts planning performance even in multi-level hierarchical settings.

\begin{figure}[t]
    \centering
    \begin{subfigure}[b]{0.475\linewidth}
        \centering
        \includegraphics[width=\linewidth]{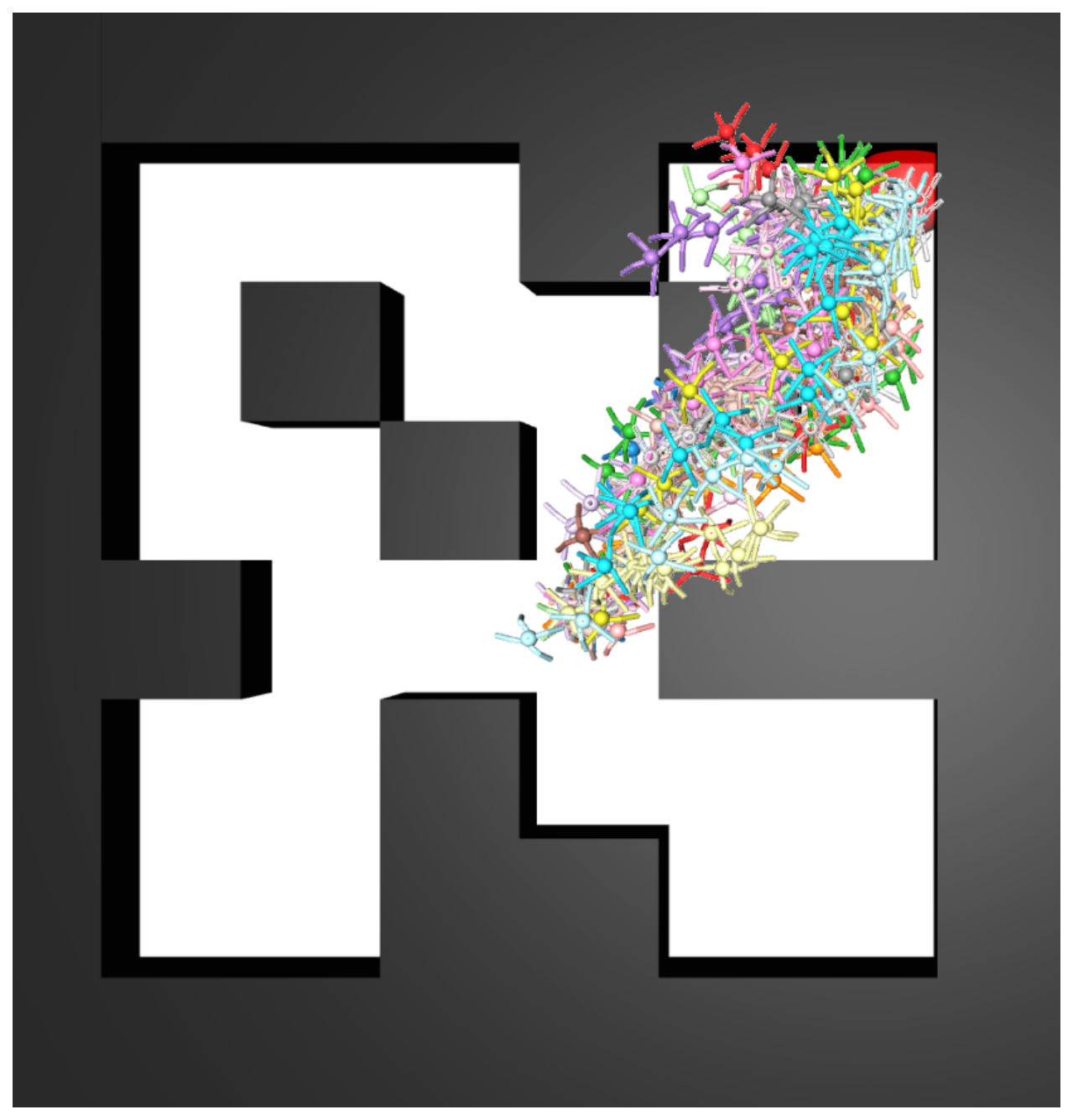}
        \caption{HD}
        \label{subfig:image3}
    \end{subfigure}
    \hfill
    \begin{subfigure}[b]{0.475\linewidth}
        \centering
        \includegraphics[width=\linewidth]{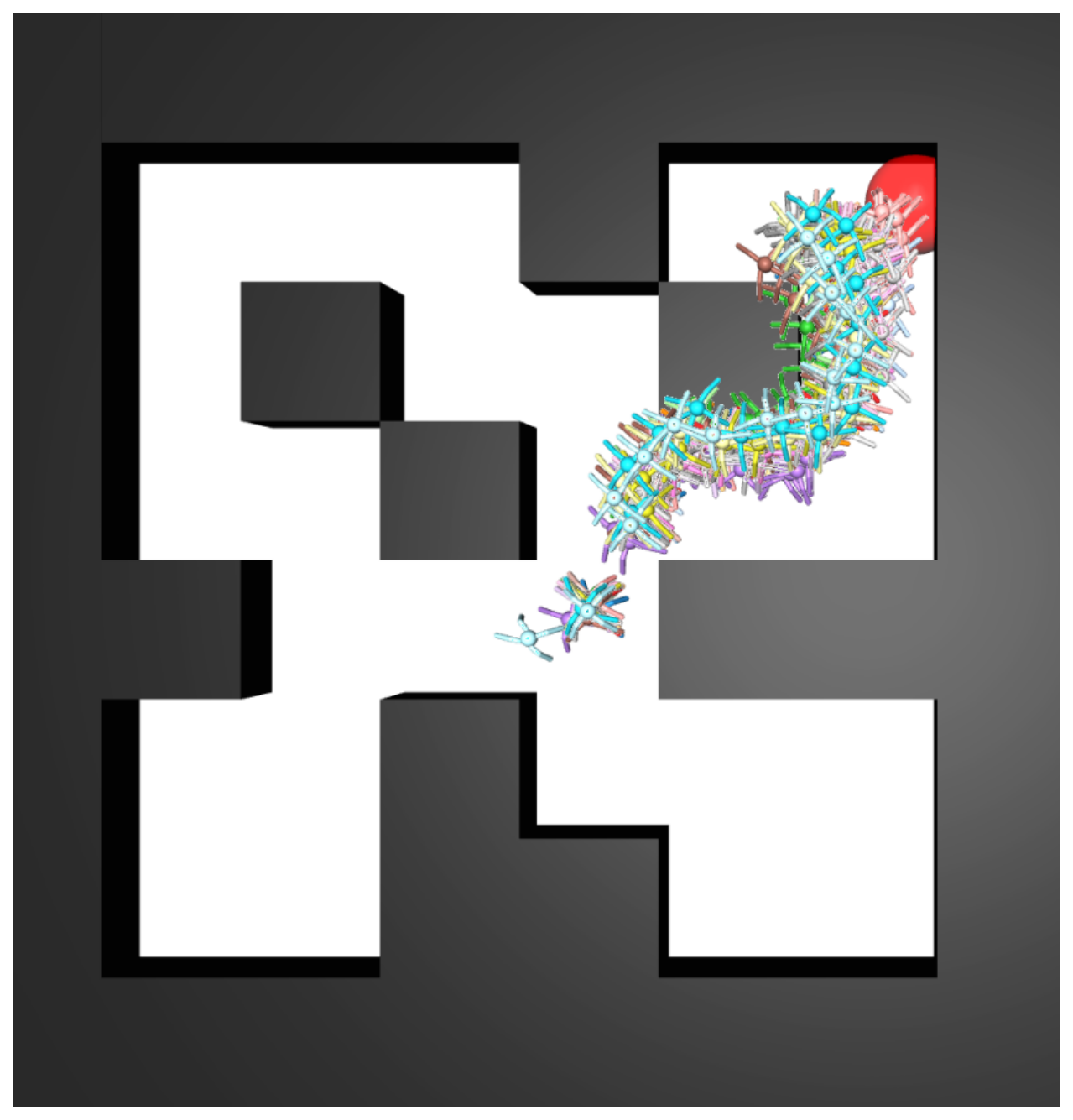}
        \caption{$\text{HD}^{\mathcal{P}}$ (\textbf{ours})}
        \label{subfig:image9}
    \end{subfigure}
    \caption{
        Visual comparison of generated plans on the AntMaze environment. The goal is marked by a red sphere. We plot 20 sampled plans in different colors. (a) shows plans generated by Hierarchical Diffuser (HD) \cite{chen2024simple}, which often produces infeasible trajectories that pass through walls. (b) demonstrates the results of HD augmented with LoMAP, which respects the environment geometry and generates more reliable, feasible trajectories.
    }
    \vspace{-0.6cm}
    \label{fig:ant_comp}
\end{figure}

\subsection{Generating Minority Sample}
\label{sec:minority_expts}

The ability to generate minority data can be critical in real-world scenarios where increasing the diversity of rare-condition examples can improve predictive performance. However, any minority samples must still align with the true data distribution rather than represent artifacts. To explore whether our method can facilitate the generation of \emph{feasible} minority samples in low-density regions, we adopt minority guidance~\cite{um2024dont}, which provides additional guidance toward low-density regions.

As shown in Figure~\ref{fig:maze_lde}, Diffuser alone sometimes fails to capture alternative feasible paths, leading to poor coverage despite viable shortcuts.  While minority guidance improves coverage, it also tends to introduce infeasible trajectories. In contrast, LoMAP mitigates this issue by refining these trajectories and ensuring they remain on the valid manifold. Consequently, combining LoMAP with minority guidance can help uncover feasible yet unexplored solutions that might otherwise remain inaccessible to standard diffusion planners. Investigating how this approach can further enhance planning is a promising direction for future work.

\begin{figure}[t]
    \centering
    \begin{subfigure}[b]{0.325\linewidth}
        \centering
        \includegraphics[width=\linewidth]{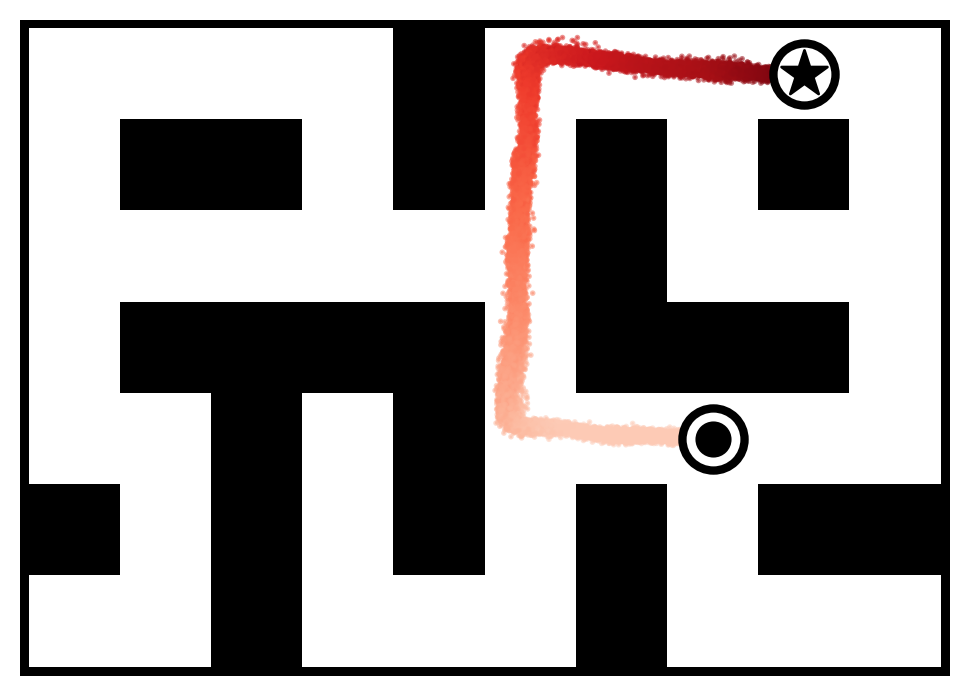}
        \caption{Diffuser\\ \phantom{i}}
    \end{subfigure}
    \hfill
    \begin{subfigure}[b]{0.325\linewidth}
        \centering
        \includegraphics[width=\linewidth]{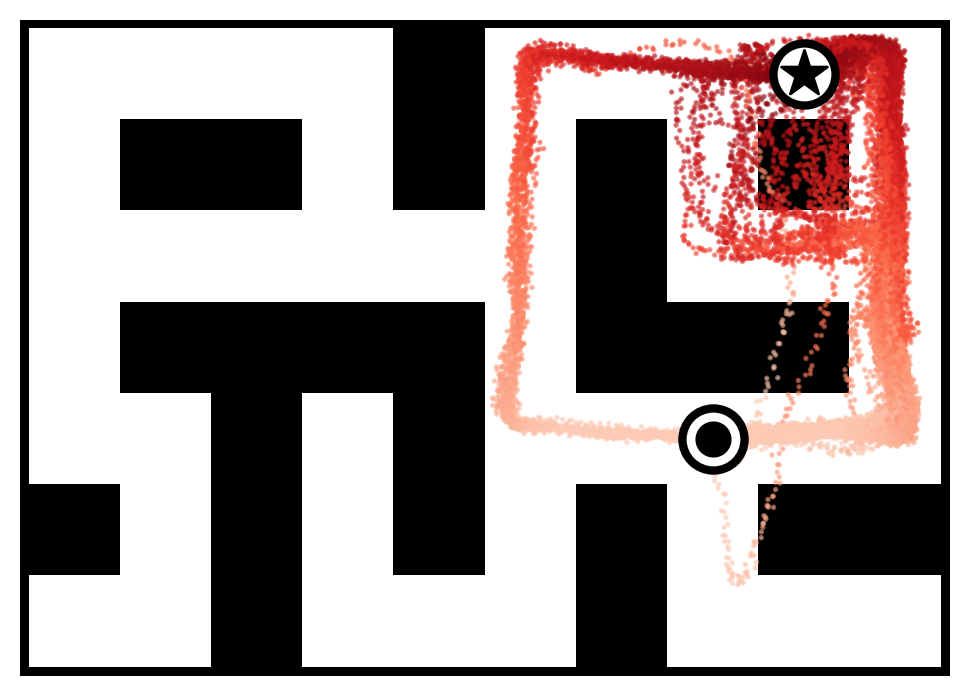}
        \caption{Diffuser\\ $+$minority guidance}
    \end{subfigure}
    \hfill
    \begin{subfigure}[b]{0.325\linewidth}
        \centering
        \includegraphics[width=\linewidth]{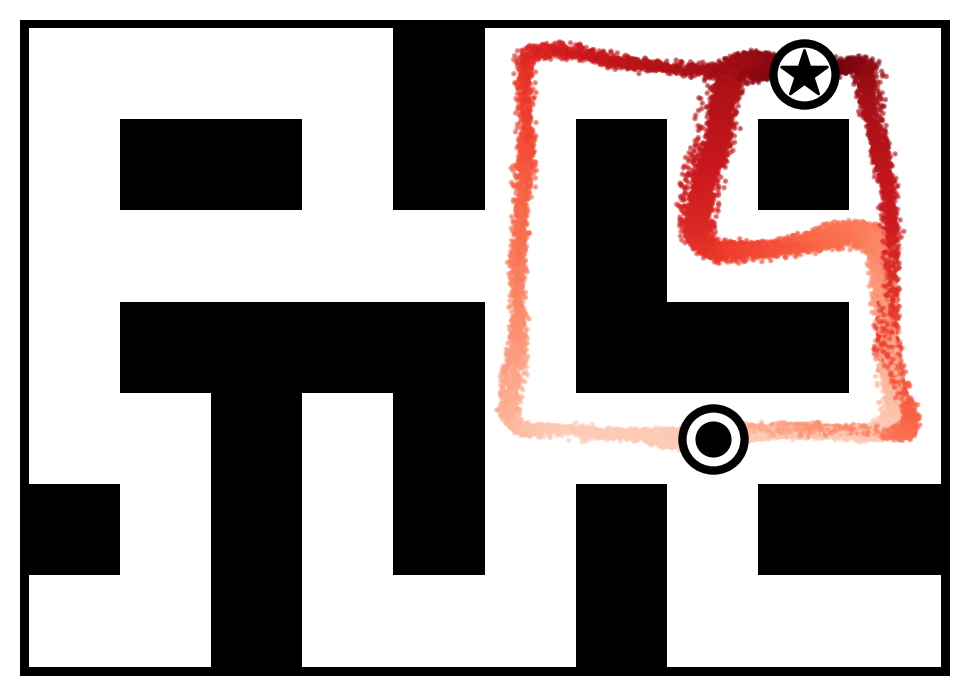}
        \caption{$\text{Diffuser}^{\mathcal{P}}$ (\textbf{ours})\\ $+$minority guidance}
    \end{subfigure}
    \vspace{-0.6cm}
    \caption{
    Sampling from low-density regions in Maze2D using minority guidance~\cite{um2024dont}, given a specified start \protect{\raisebox{-.05cm}{\includegraphics[height=.35cm]{figures/mark_start_crop.png}}} and goal \protect{\raisebox{-.05cm}{\includegraphics[height=.35cm]{figures/mark_goal_crop.png}}} condition.
    }
    \vspace{-0.5cm}
    \label{fig:maze_lde}
\end{figure}

\section{Related Work}

\paragraph{Diffusion Planners in Offline Reinforcement Learning}
Diffusion probabilistic models~\cite{sohl2015deep,ho2020denoising} have recently gained prominence in reinforcement learning (RL), particularly in the offline setting. By iteratively denoising samples from noise, these models learn the gradient of the data distribution~\cite{song2019generative}, bridging connections to score matching~\cite{hyvarinen2005estimation} and energy-based models (EBMs) \cite{du2019implicit,grathwohl2020learning}. Their expressive power in modeling complex, high-dimensional data has led to applications as planners \cite{janner2022planning, ajay2023is}, policies~\cite{wang2023diffusion}, and data synthesizers~\cite{lu2023synthetic, wang2024prioritized}.

Diffuser~\cite{janner2022planning} pioneered the use of diffusion models for planning by generating entire trajectories, demonstrating notable flexibility in long-horizon tasks. Concretely, an unconditional diffusion model is trained on offline trajectories and paired with a separate network that estimates returns; this network then guides trajectory samples toward high-return regions during inference~\cite{dhariwal2021diffusion}. Extending this framework, Decision Diffuser \cite{ajay2023is} applies \emph{classifier-free guidance}, conditioning the diffusion model directly on reward or constraint signals and thereby removing the need for a separately trained reward function. Meanwhile, AdaptDiffuser \cite{liang2023adaptdiffuser} progressively fine-tunes the diffusion model with high-quality synthetic data, improving generalization in goal-conditioned tasks. Beyond these efforts, diffusion models have also been employed in hierarchical planning~\cite{li2023hierarchical,chen2024simple}, multi-task RL~\cite{he2023diffusion,ni2023metadiffuser}, and multi-agent settings~\cite{zhu2023madiff}.

Despite these advances, diffusion planners remain susceptible to \emph{stochastic failures}, occasionally producing trajectories that deviate from the feasible manifold. Although some works mitigate this issue by refining trajectories post hoc~\cite{lee2023refining,feng2024resisting}, a robust, training-free approach to consistently maintain manifold adherence throughout the sampling process has yet to be established.

\vspace{-0.2cm}
\paragraph{Projections in Diffusion Models}
Several works in the image-generation domain have introduced projection techniques to mitigate off-manifold updates during diffusion sampling. For instance, MCG \cite{chung2022improving} projects measurement gradients onto the data manifold in inverse problems, guided by Tweedie’s formula. DSG \cite{yang2024guidance} replaces the random Gaussian step with a deterministic update constrained to a hypersphere. This avoids deviating from the intermediate diffusion manifold, allowing for substantially larger guidance steps. Meanwhile, MPGD \cite{he2024manifold} employs a pre-trained autoencoder to learn the data manifold and projects the sample onto the tangent space of the clean data manifold via a pre-trained autoencoder. 
However, the performance of MPGD is heavily depends on the expressive power of the autoencoder. As a result, it is difficult to deploy MPGD in diverse offline RL tasks, particularly where pre-trained autoencoders are unavailable. In contrast, our proposed method is entirely \emph{training-free}, projecting samples onto \emph{both} the clean and intermediate diffusion manifolds using only local approximations from the offline dataset.

\section{Conclusion}
In this work, we investigated the manifold deviation issue that arises in diffusion-based trajectory planning, where inaccurate guidance causes sampled trajectories to deviate from the feasible data manifold. To address this, we introduced \emph{Local Manifold Approximation and Projection} (LoMAP), a training-free method that employs local, low-rank projections to constrain each reverse diffusion step to the underlying data manifold. By ensuring that intermediate samples remain close to this manifold, LoMAP substantially reduces the risk of generating infeasible or low-quality trajectories. Empirical results on various offline RL benchmarks demonstrate the effectiveness of our approach. Additionally, LoMAP can be incorporated into hierarchical diffusion planning for more challenging tasks such as AntMaze. Overall, our results establish LoMAP as an easily integrable component for diffusion-based planners, empowering them to consistently remain on the data manifold and thereby providing safer and more robust long-horizon trajectories.

\section*{Acknowledgements}

This work was supported by Institute of Information \& communications Technology Planning \& Evaluation (IITP) grant funded by the Korea government (MSIT) (No.2019-0-00075, Artificial Intelligence Graduate School Program (KAIST); No. 2022-0-00984, Development of Artificial Intelligence Technology for Personalized Plug-and-Play Explanation and Verification of Explanation; No. RS-2024-00457882, AI Research Hub Project; No. RS-2024-00509258, AI Guardians: Development of Robust, Controllable, and Unbiased Trustworthy AI Technology).



\section*{Impact Statement}

This paper advances the field of Machine Learning through a new method for diffusion-based trajectory planning. We do not identify any direct negative societal impacts that must be specifically highlighted. However, we encourage practitioners to apply this work responsibly and evaluate real-world safety implications before deployment.


\bibliography{references}
\bibliographystyle{icml2025}

\newpage
\appendix
\onecolumn
\section{Proofs}\label{A_Proofs}

\Firstprop*
\begin{proof}
By the forward process at in \eqref{eq:ddpm_forward},
\[
  \boldsymbol{\tau}^i
  =
  \sqrt{\alpha_i}\,\boldsymbol{\tau}^0
  +
  \sqrt{1-\alpha_i}\,\boldsymbol{\epsilon},
  \quad
  \boldsymbol{\epsilon}\sim \mathcal{N}(\mathbf{0},\mathbf{I}),
\]
We have, 
\[
  q(\boldsymbol{\tau}^0 | \boldsymbol{\tau}^i)
  =
  \mathcal{N}\!\Bigl(
    \frac{\boldsymbol{\tau}^i}{\sqrt{\alpha_i}},
    \frac{1-\alpha_i}{\alpha_i}\,\mathbf{I}_d
  \Bigr),
  \quad
  \nabla_{\boldsymbol{\tau}^i}\,\log q(\boldsymbol{\tau}^0 | \boldsymbol{\tau}^i)
  =
  -\frac{1}{\sqrt{\,1-\alpha_i\,}}\,\boldsymbol{\epsilon}.
\]
Let us abbreviate the distribution \(\mu(\boldsymbol{\tau}^0) := q(\boldsymbol{\tau}^0 | \boldsymbol{\tau}^i)\). Then
\[
  \nabla_{\boldsymbol{\tau}^i}\,\mathcal{J}_t(\boldsymbol{\tau}^i)
  =
  \frac{\mathbb{E}_{\mu}\bigl[e^{\mathcal{J}(\boldsymbol{\tau}^0)}\,
        \nabla_{\boldsymbol{\tau}^i}\,\log \mu(\boldsymbol{\tau}^0)\bigr]}{
        \mathbb{E}_{\mu}\bigl[e^{\mathcal{J}(\boldsymbol{\tau}^0)}\bigr]},
  \quad
  \nabla_{\boldsymbol{\tau}^i}\,\mathcal{J}_{\phi}^{\mathrm{MSE}}(\boldsymbol{\tau}^i)
  =
  \mathbb{E}_{\mu}\bigl[
    \mathcal{J}(\boldsymbol{\tau}^0)\,\nabla_{\boldsymbol{\tau}^i}\,\log \mu(\boldsymbol{\tau}^0)
  \bigr].
\]
Subtracting these yields
\[
  \nabla_{\boldsymbol{\tau}^i}\,\mathcal{J}_t(\boldsymbol{\tau}^i)
  -
  \nabla_{\boldsymbol{\tau}^i}\,\mathcal{J}_{\phi}^{\mathrm{MSE}}(\boldsymbol{\tau}^i)
  =
  \mathbb{E}_{\mu}\!\Bigl[
    \Bigl(
       \frac{e^{\mathcal{J}(\boldsymbol{\tau}^0)}}
            {\mathbb{E}_{\mu}[\,e^{\mathcal{J}(\boldsymbol{\tau}^0)}]}
       -
       \mathcal{J}(\boldsymbol{\tau}^0)
    \Bigr)\,
    \nabla_{\boldsymbol{\tau}^i}\,\log \mu(\boldsymbol{\tau}^0)
  \Bigr].
\]
Since $\nabla_{\tau^i}\!\log \mu(\tau^0) = -\,\tfrac{1}{\sqrt{1-\alpha_i}}\;\boldsymbol{\epsilon}$, we obtain
\[
  \nabla_{\boldsymbol{\tau}^i}\,\mathcal{J}_t(\boldsymbol{\tau}^i)
  -
  \nabla_{\boldsymbol{\tau}^i}\,\mathcal{J}_{\phi}^{\mathrm{MSE}}(\boldsymbol{\tau}^i)
  =
  -\frac{1}{\sqrt{1-\alpha_i}}\,
    \mathbb{E}\bigl[\delta(\boldsymbol{\tau}^0)\,\boldsymbol{\epsilon}\bigr].
  \label{eq:delta-factor-epsilon}
\]
By Jensen's inequality, $\delta(\boldsymbol{\tau}^0)$ has positive mean whenever $\mathcal{J}$ is not constant. Furthermore, as $d$ grows, we have $\|\boldsymbol{\epsilon}\|_2$ on the order of $\sqrt{d}$. One can then choose a \(\boldsymbol{\tau}^i\) so that \(\delta(\boldsymbol{\tau}^0)\) remains well-aligned with \(\boldsymbol{\epsilon}\), giving
\[
  \Delta_{\mathrm{guidance}}(\boldsymbol{\tau}^i)
  =
  \frac{1}{\sqrt{1-\alpha_i}}
  \bigl\|
    \mathbb{E}\bigl[\delta(\boldsymbol{\tau}^0)\,\boldsymbol{\epsilon}\bigr]
  \bigr\|_2
  \ge
  \frac{c}{\sqrt{1-\alpha_i}}\sqrt{d}.
\]
for some constant $c$. Thus we can complete the proof.

\end{proof}

\section{Limitations}

While LoMAP provides a simple yet effective approach for mitigating manifold deviations, it exhibits certain limitations. First, the current implementation uses cosine distance for manifold approximation, which may not be optimal in very high-dimensional state spaces, such as pixel-based observations. Developing more robust manifold approximation techniques suitable for complex, high-dimensional environments remains an important direction for future research. For instance, combining LoMAP with latent trajectory embeddings~\cite{co2018self} could be a promising approach. Second, our method inherently encourages sampled trajectories to stay close to the offline data manifold, which may restrict exploration of novel behaviors. While our primary focus in this work is ensuring safe and reliable trajectory generation—particularly beneficial for safety-critical offline RL applications—addressing this exploration limitation remains crucial. Integrating LoMAP with complementary methods such as trajectory stitching or data augmentation~\cite{ziebart2008maximum, li2024diffstitch,lee2024gta, yang2025rtdiff}, which generate diverse synthetic trajectories, could alleviate this issue and is an interesting area for future study. Furthermore, exploring how LoMAP could be effectively extended to challenging benchmarks that explicitly require stitching and long-horizon reasoning, such as OGBench~\cite{park2024ogbench}, represents an intriguing future research direction.

\section{Extended Related Work}

Beyond hierarchical structures~\cite{li2023hierarchical, chen2024simple}, multi-agent setups~\cite{zhu2023madiff}, and post-hoc trajectory refinement methods~\cite{lee2023refining, feng2024resisting}, recent diffusion planners have explored integrating tree search methods~\cite{yoon2025monte}, refining trajectory sampling techniques~\cite{dong2024diffuserlite}, examining critical design choices to improve robustness~\cite{lu2025makes}, composing short segments into long-horizon trajectories at inference time~\cite{mishra2023generative, luo2025generative}, efficient latent diffusion planning~\cite{li2024efficient}, and inference-time guided generation~\cite{wang2024inference, lee2024learning, hao2024language}.

While diffusion models have achieved impressive performance on various generative tasks, effectively steering them toward specific objectives remains challenging. Broadly, existing methods for aligning diffusion models can be categorized into two groups: fine-tuning methods and guidance-based methods. Fine-tuning methods, such as reinforcement learning-based tuning~\cite{fan2023reinforcement} or direct gradient optimization~\cite{clark2023directly, prabhudesai2023aligning}, directly update model parameters to maximize target objectives. Despite their effectiveness, these methods tend to excessively focus on reward optimization, often compromising the diversity and fidelity of generated outputs~\cite{clark2023directly}. Conversely, guidance-based methods offer a simpler inference-time alternative that preserves the pretrained model distribution. Among these, classifier guidance~\cite{dhariwal2021diffusion} involves training an auxiliary classifier to guide the sampling process toward target conditions, but the additional training overhead can be costly. Recent training-free guidance approaches circumvent this by directly utilizing pretrained classifiers or reward predictors via approximate inference~\cite{chung2022diffusion, song2023loss, he2024manifold}. In particular, these methods commonly rely on Tweedie-based denoising~\cite{robbins1992empirical}, which provides predictions of clean data given noisy samples. However, inaccuracies inherent to Tweedie's approximation limit its effectiveness, especially in accurately aligning diffusion samples with target objectives. Sequential Monte Carlo (SMC)-based approaches~\cite{wu2023practical, cardoso2023monte} address inaccuracies in guidance through principled probabilistic inference. Although these methods provide asymptotic exactness, their practical efficiency under limited sampling budgets remains a significant challenge.

\section{Additional Results}

\begin{wraptable}{r}{7.5cm}
  \vspace{-0.45cm}
  \caption{Comparison of Realism Scores on Maze2D tasks. Higher realism scores indicate samples closer to the true data manifold.}
  \label{tab:realism_score}
  \vspace{-0.1cm}
  \centering
  \resizebox{.43\textwidth}{!}{
\begin{tabular}{lcc}
\toprule
Environment & Diffuser & $\text{Diffuser}^{\mathcal{P}}$ \\
\midrule
Maze2D U-Maze & 1.23 & \textbf{1.30} \\
Maze2D Medium & 1.40 & \textbf{1.56} \\
Maze2D Large  & 1.36 & \textbf{1.47} \\
\bottomrule
\end{tabular}
  }
  \vspace{-0.4cm}
\end{wraptable}

\paragraph{Realism score evaluation.}

To further validate the effectiveness of LoMAP, we compute the Realism Score~\cite{kynkaanniemi2019improved}, which measures how closely generated trajectories lie to the true manifold defined by the offline dataset. Specifically, we approximate the true manifold using $k$-nearest neighbor (k-NN) hyperspheres constructed from 20,000 offline trajectories, and evaluate the average realism score over 100,000 sampled trajectories. As shown in Table~\ref{tab:realism_score}, applying LoMAP consistently yields higher realism scores compared to diffusion sampling without LoMAP, demonstrating that LoMAP effectively produces trajectories closer to the true data manifold.


\paragraph{Dynamic consistency evaluation.}

Additionally, we assess trajectory feasibility for MuJoCo locomotion tasks using the dynamic mean squared error (Dynamic MSE), defined as:
\[
    \text{Dynamic MSE} = \|f^*(\bs{s}, \bs{a}) - \bs{s}'\|_2^2,
\]
where $f^*$ represents the true environment dynamics. As shown in Table~\ref{tab:dynamic_mse}, LoMAP consistently achieves lower Dynamic MSE compared to diffusion sampling without LoMAP, clearly indicating improved adherence to the true dynamics of the environment.

\begin{table}[ht!]
\centering
\caption{Dynamic MSE comparison on MuJoCo locomotion tasks. Lower Dynamic MSE indicates better adherence to true environment dynamics.}
\label{tab:dynamic_mse}
\begin{tabular}{lcc}
\toprule
Environment & Diffuser & $\text{Diffuser}^{\mathcal{P}}$  \\
\midrule
halfcheetah-medium-expert & 0.363 & \textbf{0.295} \\
hopper-medium-expert      & 0.027 & \textbf{0.020} \\
walker2d-medium-expert    & 0.391 & \textbf{0.293} \\
halfcheetah-medium        & 0.352 & \textbf{0.285} \\
hopper-medium             & 0.024 & \textbf{0.021} \\
walker2d-medium           & 0.395 & \textbf{0.293} \\
halfcheetah-medium-replay & 0.710 & \textbf{0.555} \\
hopper-medium-replay      & 0.049 & \textbf{0.045} \\
walker2d-medium-replay    & 0.829 & \textbf{0.506} \\
\bottomrule
\end{tabular}
\end{table}

\paragraph{Additional comparison with inference-time guidance methods.}

We further provide comparative evaluations against recent inference-time guidance methods, including stochastic sampling~\cite{wang2024inference}, constrained gradient guidance~\cite{lee2024learning}, and inpainting optimization~\cite{hao2024language}.

For stochastic sampling~\cite{wang2024inference}, we adapted goal-conditioning via MCMC sampling, tuning the number of sampling steps $\{2, 4, 6, 8\}$. To implement constrained gradient guidance~\cite{lee2024learning}, we approximated maze walls as multiple spherical constraints following~\citet{shaoul2024multi}, defining a sphere-based cost:
\[
    J_c(\bs{\tau}) = \sum_{m=1}^{M} \sum_{t=1}^{H} \max\left(r - \mathrm{dist}(\bs{\tau}_t, \bs{p}_m), 0\right),
\]

where $H$ is the planning horizon, $\bs{p}_m$ the center of sphere constraints, and $r$ their radius. We tuned the guidance scale within the range $\{0.001,\,0.01,\,0.05,\,0.1\}$. To compare with inpainting optimization~\cite{hao2024language}, we emulated a vision-language model (VLM)-based keyframe generation by training a high-level policy using Hierarchical Implicit Q-Learning (HIQL)~\cite{park2023hiql}. The policy generated optimal subgoal sequences (keyframes), with $k=25$ steps, aligning with the official implementation provided by ogbench \cite{park2024ogbench}.

Table~\ref{tab:artifact_comparison_other_guidance} presents artifact ratio comparisons. LoMAP consistently achieves the lowest artifact ratio, significantly outperforming all inference-time guidance baselines. Even the constrained gradient approach~\cite{lee2024learning}, despite explicitly modeling maze walls, performed worse, likely due to gradient-based projections struggling with nonconvex constraints. Both stochastic sampling~\cite{wang2024inference} and inpainting optimization~\cite{hao2024language} improved over Diffuser but still exhibited higher artifact ratios than LoMAP.

\begin{table}[h!]
\centering
\caption{Artifact ratio comparison with inference-time guidance methods in Maze2D-Large. Lower values indicate fewer infeasible trajectories.}
\label{tab:artifact_comparison_other_guidance}
\begin{tabular}{cccccc}
\toprule
\# of Plans & $\text{Diffuser}^{\mathcal{P}}$ (LoMAP, ours) & Diffuser & \cite{wang2024inference} & \cite{lee2024learning} & \cite{hao2024language} \\
\midrule
10 & \textbf{0.35} & 0.50 & 0.42 & 0.49 & 0.43 \\
20 & \textbf{0.35} & 0.62 & 0.44 & 0.54 & 0.46 \\
30 & \textbf{0.38} & 0.66 & 0.47 & 0.61 & 0.49 \\
\bottomrule
\end{tabular}
\end{table}

\paragraph{Visual comparisons.}

We provide additional rollout visualizations. Figure~\ref{fig:append_loco_comp} depicts rollouts executed by Diffuser~\cite{janner2022planning} and our $\text{Diffuser}^{\mathcal{P}}$ on the Hopper-Medium dataset, demonstrating effectiveness even in suboptimal and challenging data distribution. Meanwhile, Figure~\ref{fig:append_ant_comp} offers a side-by-side comparison of Diffuser and $\text{HD}^{\mathcal{P}}$ on AntMaze-Large-Diverse. We observe that, while standard Diffuser frequently produces trajectories that collide with maze walls or fail to reach the goal, our hierarchical extension with LoMAP (i.e., $\text{HD}^{\mathcal{P}}$) maintains more coherent routes and significantly increases the likelihood of reaching the target (marked by the red sphere). In both examples, projecting intermediate diffusion steps onto a locally approximated manifold substantially mitigates stochastic failures, highlighting the effectiveness of our approach for long-horizon, high-dimensional control tasks.

\begin{figure}[ht]
    \centering
    \begin{minipage}[t]{0.49\textwidth}
        \centering
        \includegraphics[width=\linewidth,height=2cm]{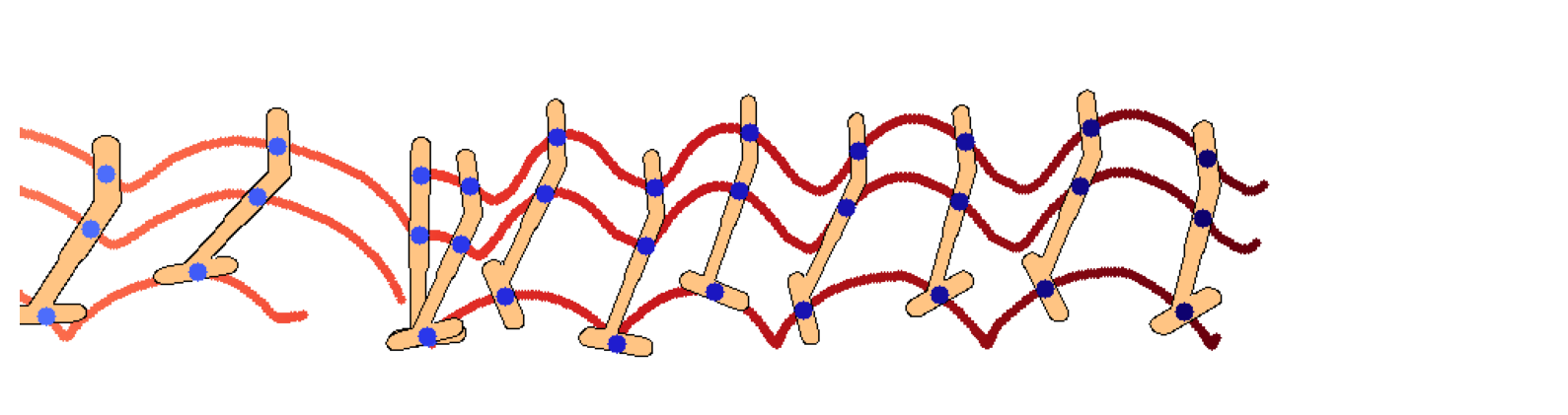} \\[-1.5ex]
        
        \includegraphics[width=\linewidth,height=2cm]{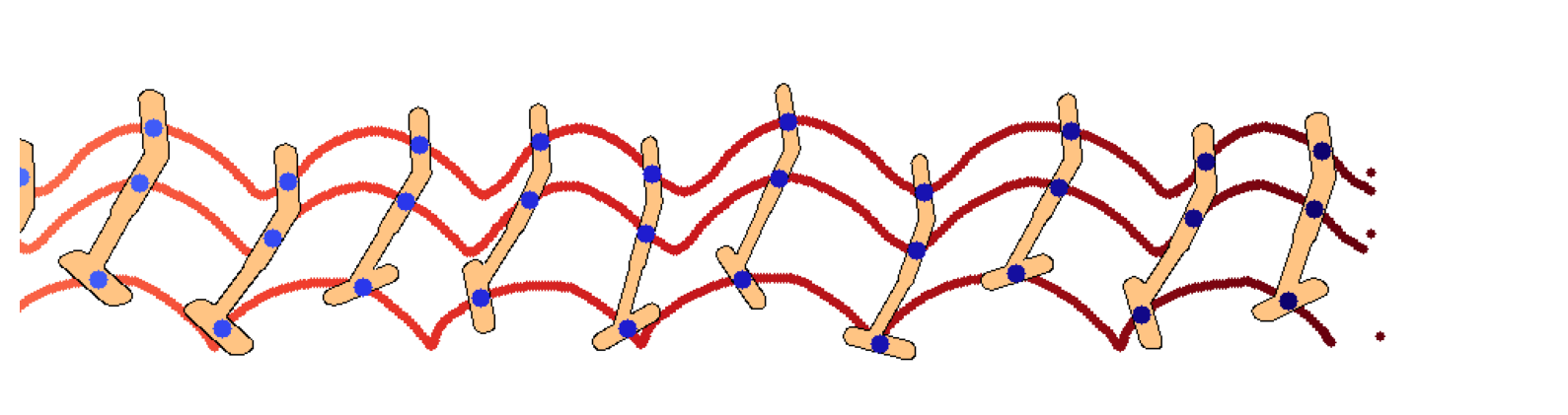} \\[-1.5ex]
        
        \includegraphics[width=\linewidth,height=2cm]{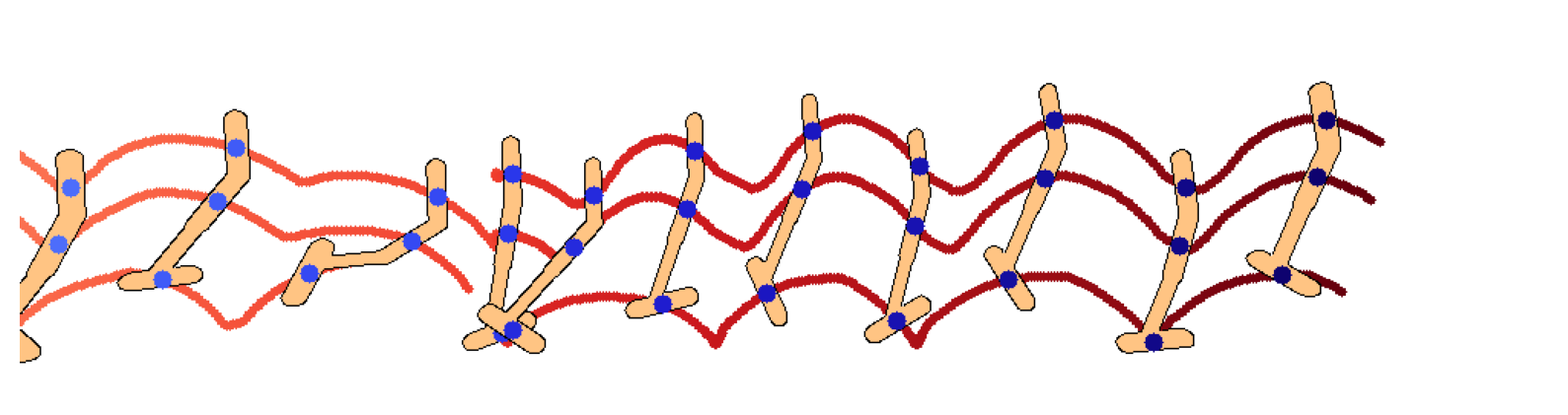} \\[-1.5ex]
            
        \includegraphics[width=\linewidth,height=2cm]{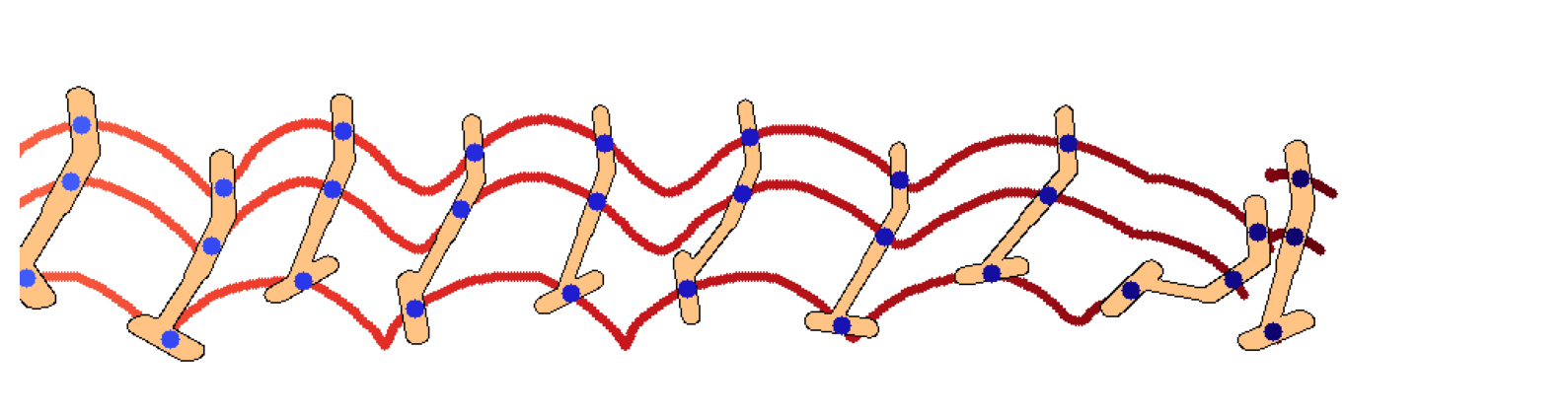} \\[-1.5ex]
        
        \includegraphics[width=\linewidth,height=2cm]{figures/loco/loco_bad_4.png} \\[-1.5ex]
        {\small Diffuser \cite{janner2022planning}}  

    \end{minipage}
    \hfill
    \begin{minipage}[t]{0.49\textwidth}
        \centering
        \includegraphics[width=\linewidth,height=2cm]{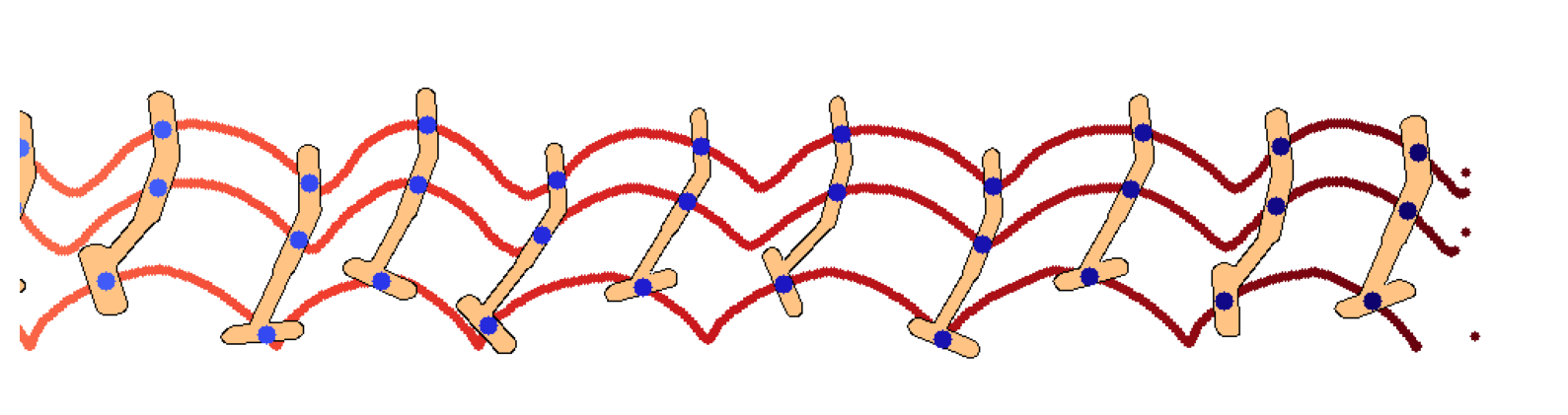} \\[-1.5ex]
        
        \includegraphics[width=\linewidth,height=2cm]{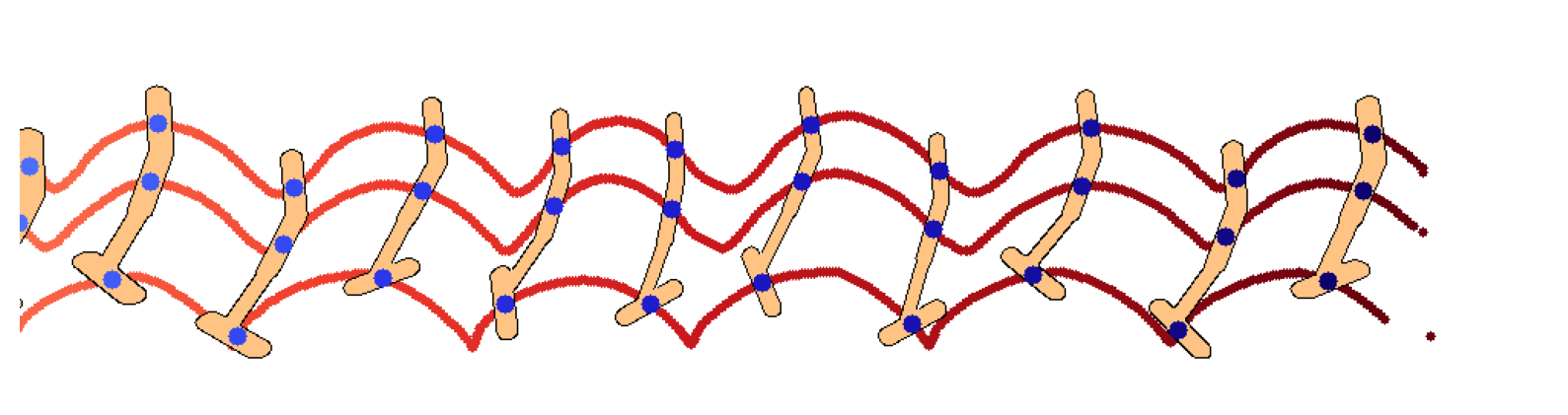} \\[-1.5ex]
        
        \includegraphics[width=\linewidth,height=2cm]{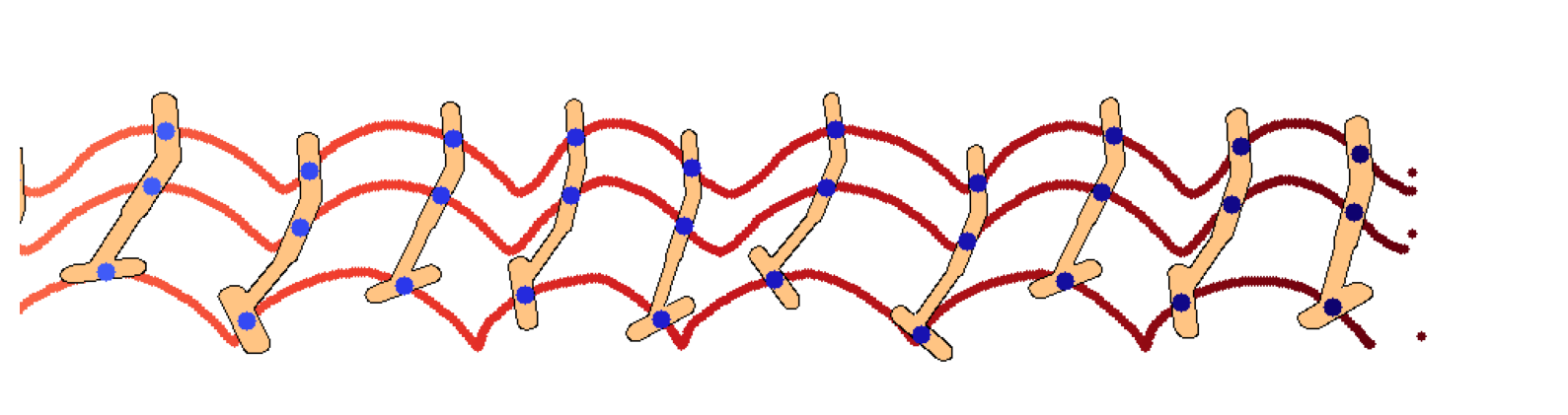} \\[-1.5ex]
            
        \includegraphics[width=\linewidth,height=2cm]{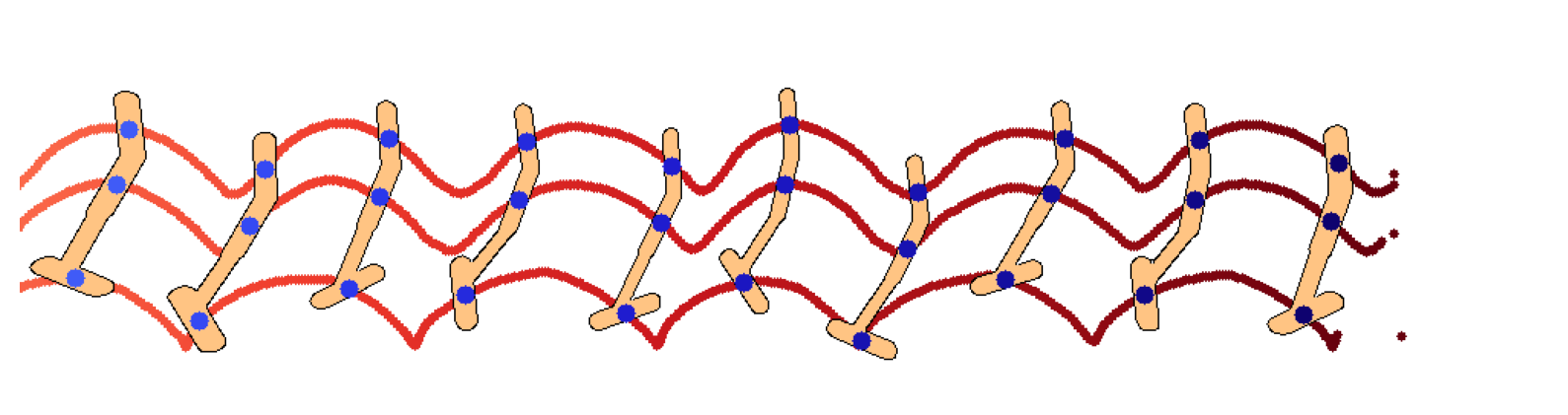} \\[-1.5ex]
        
        \includegraphics[width=\linewidth,height=2cm]{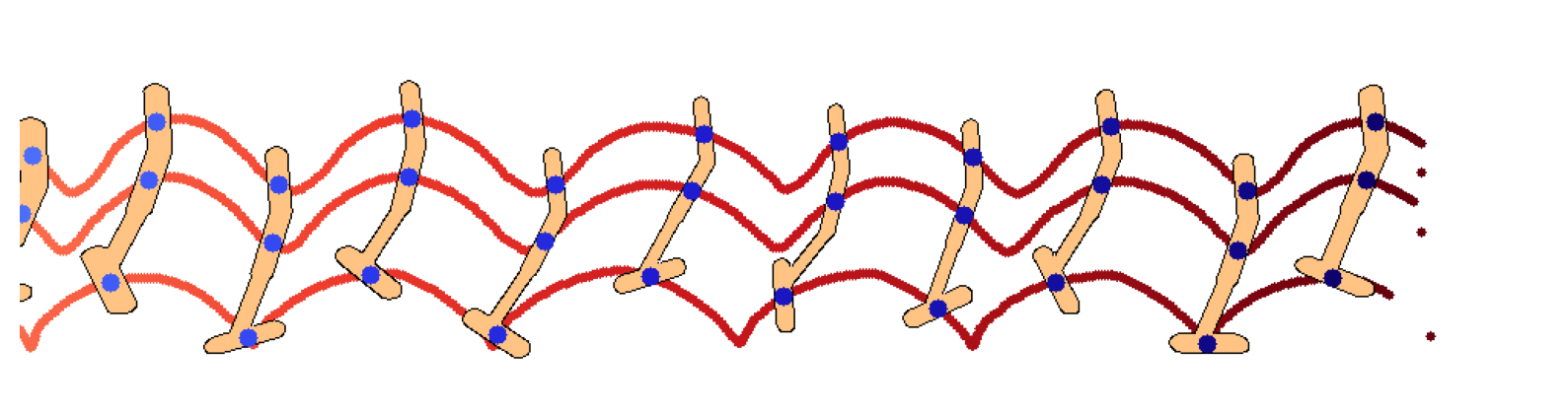} \\[-1.5ex]
        {\small \text{$\text{Diffuser}^{\mathcal{P}}$ (ours)}} 
    \end{minipage}
    
    \caption{Visual comparison of rollout trajectories from Diffuser and $\text{Diffuser}^{\mathcal{P}}$ in the Hopper-Medium task.}
    \label{fig:append_loco_comp}
\end{figure}

\begin{figure}[ht]
    \centering
    \begin{minipage}[t]{0.47\textwidth}
        \centering   
        \includegraphics[width=\linewidth]{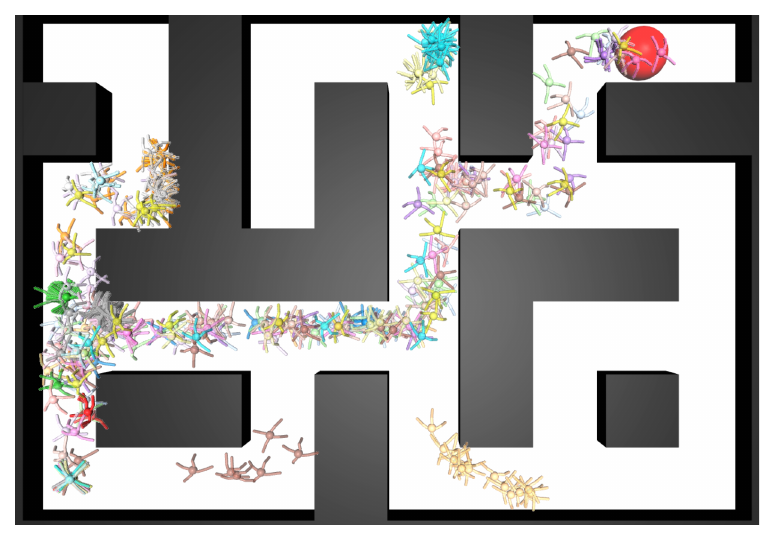} \\[-1.5ex]
        {\small Diffuser \cite{janner2022planning}} 
    \end{minipage}
    \hfill
    \begin{minipage}[t]{0.47\textwidth}
        \centering   
        \includegraphics[width=\linewidth]{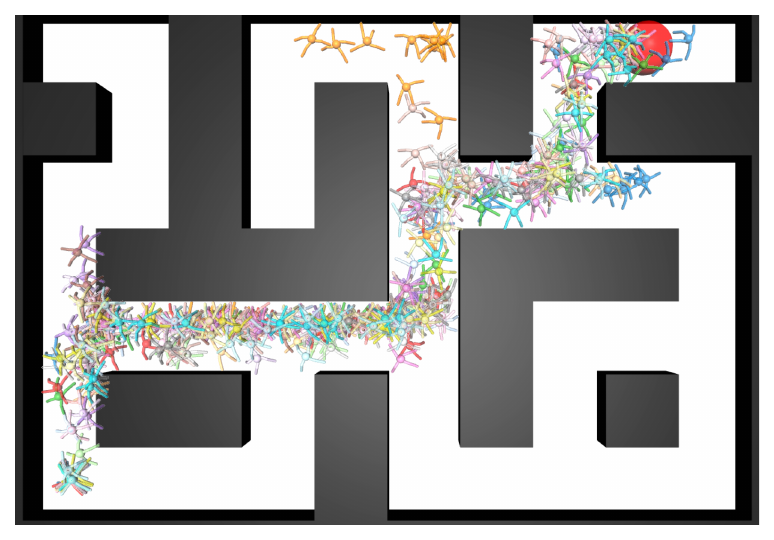} \\[-1.5ex]
        {\small $\text{HD}^{\mathcal{P}}$ (\textbf{ours})} 
    \end{minipage}

    \caption{Visual comparison of rollout trajectories from Diffuser and $\text{HD}^{\mathcal{P}}$ in the AntMaze-Large-Diverse task. We plot each trajectory at 50-step intervals for clarity. The goal is marked by a red sphere, and 20 rollout trajectories are shown in different colors.}
    \vspace{-0.3cm}

    \label{fig:append_ant_comp}
\end{figure}

\section{Experimental Details}\label{C_Details}
\subsection{Environments}

\paragraph{Maze2D.}
Maze2D environments~\cite{fu2020d4rl} require an agent to undertake long-horizon navigation, moving toward a distant goal location. The agent accrues no reward unless it successfully reaches the goal, at which point it receives a reward of 1. The three available layouts—U-Maze, Medium, and Large—vary in complexity. Further, Maze2D features two distinct tasks: a single-task variant with a fixed goal and a multi-task option (\emph{Multi2D}), which randomizes the goal location at the beginning of each episode. A summary of key details can be found in Table~\ref{tab:maze2d_env_details}.

\begin{table*}[h]
    \caption{Environment details for Maze2D experiments.}
    \label{tab:maze2d_env_details}
    \centering
    \vspace{0.2cm}
    \begin{tabular}{c|c|c|c}
        \toprule
        & \textbf{Maze2D-Large} & \textbf{Maze2D-Medium} & \textbf{Maze2D-UMaze} \\
        \midrule
        State space $\mathcal{S}$ & $\mathbb{R}^{4}$ & $\mathbb{R}^{4}$ & $\mathbb{R}^{4}$ \\
        Action space $\mathcal{A}$ & $\mathbb{R}^{2}$ & $\mathbb{R}^{2}$ & $\mathbb{R}^{2}$ \\
        Goal space $\mathcal{G}$ & $\mathbb{R}^{2}$ & $\mathbb{R}^{2}$ & $\mathbb{R}^{2}$ \\
        Episode length & 800 & 600 & 300 \\
        \bottomrule
    \end{tabular}
\end{table*}

\paragraph{Locomotion.}
Gym-MuJoCo locomotion tasks~\cite{fu2020d4rl} serve as widely recognized benchmarks for assessing algorithm performance on heterogeneous datasets of varying quality. The \texttt{Medium} dataset consists of one million samples gathered from an SAC~\cite{haarnoja2018soft} agent trained to roughly one-third of expert-level performance. The \texttt{Medium-Replay} dataset contains all experiences accumulated throughout the SAC training process up to that same performance threshold. Finally, the \texttt{Medium-Expert} dataset is created by combining expert demonstrations and suboptimal data in equal amounts. Details on the state/action spaces and episode lengths can be found in Table~\ref{tab:locomotion_env_details}.

\begin{table*}[h]
    \caption{Environment details for Locomotion experiments.}
    \label{tab:locomotion_env_details}
    \centering
    \vspace{-0.2cm}
    \begin{tabular}{c|c|c|c}
        \toprule
        & \textbf{Hopper-*} & \textbf{Walker2d-*} & \textbf{Halfcheetah-*} \\
        \midrule
        State space $\mathcal{S}$ & $\mathbb{R}^{11}$ & $\mathbb{R}^{17}$ & $\mathbb{R}^{17}$ \\
        Action space $\mathcal{A}$ & $\mathbb{R}^{3}$ & $\mathbb{R}^{6}$ & $\mathbb{R}^{6}$ \\
        Episode length & 1000 & 1000 & 1000 \\
        \bottomrule
    \end{tabular}
\end{table*}

\paragraph{AntMaze.}
AntMaze tasks~\cite{fu2020d4rl} involve guiding an 8-DoF \emph{Ant} robot through intricate maze layouts using MuJoCo for physics simulation. The environment uses a sparse reward structure, granting a reward only upon reaching the designated goal, thus posing a challenging long-horizon navigation problem. Further difficulty arises from the offline dataset, which contains numerous trajectory segments that do not succeed in reaching the goal. We measure performance by the success rate of the agent reaching the endpoint. A summary of the key environment details is provided in Table~\ref{tab:antmaze_env_details}.

\begin{table*}[h]
    \caption{Environment details for AntMaze experiments.}
    \label{tab:antmaze_env_details}
    \centering
    \vspace{-0.2cm}
    \begin{tabular}{c|c}
        \toprule
         & \textbf{AntMaze-*} \\
        \midrule
        State space $\mathcal{S}$ & \(\mathbb{R}^{29}\) \\
        Action space $\mathcal{A}$ & \(\mathbb{R}^{8}\) \\
        Episode length & 1000 \\
        \bottomrule
    \end{tabular}
\end{table*}

\subsection{Implementation Details}\label{append:implementation}
Below, we summarize the key implementation details and hyperparameters used throughout our experiments:

\begin{itemize}
    \item \textbf{Network architecture.} We build on the Diffuser framework~\cite{janner2022planning}, employing a temporal U-Net with repeated convolutional residual blocks to parameterize $\boldsymbol{\epsilon}_\theta$.

    \item \textbf{Planning horizons.} For Maze2D and Multi2D tasks, the planning horizon is 128 in U-Maze, 256 in Medium, and 256 in Large. For MuJoCo locomotion tasks, the horizon is 32, and for AntMaze it is 64.

    \item \textbf{Diffusion steps.} We use 256 steps for the diffusion process in Maze2D//Multi2D Large and Medium, 128 in Maze2D//Multi2D U-Maze, and 20 in other environments.

    \item \textbf{Guidance scales.} For AntMaze tasks, we select the guidance scale $\omega$ from the set $\{5.0, 3.0, 1.0, 0.1, 0.01, 0.001\}$. In MuJoCo locomotion tasks, we select $\omega$ from $\{0.3, 0.2, 0.1, 0.01, 0.001, 0.0001\}$ during planning.

    \item \textbf{Local manifold approximation.} We tune the number of neighbors $k \in \{5, 10, 20\}$ in our local manifold approximation procedure.

    \item \textbf{Hierarchical Diffuser in AntMaze.} For the high-level and low-level planners, we follow \citet{chen2024simple} and train each component separately using trajectory segments randomly sampled from the D4RL offline dataset. Specifically, the high-level planner generates state-space trajectories with a planning horizon of 226 and temporal jumps of 15. During execution, the corresponding actions are inferred through a learned inverse dynamics model \cite{ajay2023is}.
\end{itemize}


\section{Practical Implementation}\label{Prac_Impl}

\paragraph{Manifold approximation.}
A straightforward \(k\)-nearest-neighbor retrieval from the entire offline dataset at each diffusion step can be prohibitively expensive. To mitigate this cost, we employ an \emph{inverted file} (IVF) index from the Faiss library~\cite{douze2024faiss}, which partitions the dataset into a set of coarse centroids and restricts each query to only a few relevant clusters.

Concretely, IVF uses \(k\)-means to learn \(n_{\mathrm{list}}\) centroids \(\{\mathbf{c}_1,\dots,\mathbf{c}_{n_{\mathrm{list}}}\}\) across the dataset of dimension \(d\). Each data point \(\mathbf{x}\) is mapped to its nearest centroid, forming an inverted list. A query vector \(\mathbf{q}\) is matched to its closest centroids, after which the search proceeds solely within the corresponding clusters. This design significantly reduces the number of distance computations relative to an exhaustive linear scan. Although coarse clustering can introduce minor inaccuracies, we have found this approach to be effective in our experiments.

In LoMAP, the IVF-based approximate neighbor search operates on a denoised sample \(\hat{\boldsymbol{\tau}}^{0 \mid i}\), retrieving up to \(k\) neighbors from the offline dataset. We then forward-diffuse these neighbors to timestep \(i\) and perform a rank-\(r\) principal-component analysis to approximate the local manifold of feasible trajectories. Projecting \(\boldsymbol{\tau}^i\) onto this manifold helps correct off-manifold drift caused by inexact guidance. Crucially, limiting the search to relevant clusters enables LoMAP to handle large datasets and high-dimensional state-action spaces, making it practical for long-horizon offline reinforcement-learning tasks.

\paragraph{Manifold projection.}
We observed that applying manifold projection selectively, rather than uniformly across all diffusion steps, can significantly reduce computational costs and even enhance overall performance. Specifically, we found that projection is particularly beneficial when applied during intermediate to later stages of the reverse diffusion process. We hypothesize two main reasons for this phenomenon: first, Tweedie-based denoisers inherently exhibit biases toward majority or high-density features, as noted by \citet{um2024dont}, making early stage projections less impactful. Second, the discrepancy between the learned reverse transitions of the diffusion model and the true data distribution becomes most pronounced at intermediate diffusion steps, consistent with observations reported in prior studies~\cite{na2024diffusion}. Consequently, concentrating projection efforts on these critical stages not only improves computational efficiency but also effectively mitigates manifold deviation, resulting in improved sampling quality.

\section{Baseline Performance Sources}\label{B_Sources}

\subsection{Maze2D}
The reported IQL scores come from Table 1 in \citet{janner2022planning}, RGG scores from Table 2 in \citet{lee2023refining}, and TAT scores from Table 2 in \citet{feng2024resisting}.

\subsection{Locomotion}
We obtain scores for BC, CQL, and IQL from Table 1 in \citet{kostrikov2021offline}; DT from Table 2 in \citet{chen2021decision}; TT from Table 1 in \citet{janner2021offline}; MOPO from Table 1 in \citet{yu2020mopo}; MOReL from Table 2 in \citet{kidambi2020morel}; and Diffuser from Table 2 in \citet{janner2022planning}. Scores for RGG and TAT are drawn from Table 3 in \citet{feng2024resisting}, while DD scores come from Table 1 in \citet{ajay2023is}.

\subsection{AntMaze}
Scores for DD in the AntMaze domain are taken from Table 1 in \citet{cleandiffuser}.




\end{document}